\newcommand{\cF}{\mathcal{F}}
\newcommand{\up}{^{\prime}}
\newcommand{\cB}{\mathcal{B}}
\newcommand{\cA}{\mathcal{A}}
\newcommand{\cS}{\mathcal{S}}
\newcommand{\ust}{^{\star}}
\newcommand{\cE}{\mathcal{E}}
\newcommand{\cM}{\mathcal{M}}
\newcommand{\cI}{\mathcal{I}}
\newtheorem{lemma}{Lemma}
\newtheorem{proof}{Proof}
\newtheorem{remark}{Remark}
\newtheorem{theorem}{Theorem}
\def\cA{{\mathcal{A}}} \def\cB{{\mathcal{B}}}  
\def\cE{{\mathcal{E}}} \def\cF{{\mathcal{F}}}  
\def\cI{{\mathcal{I}}}   
\def\cM{{\mathcal{M}}}  \def\cO{{\mathcal{O}}} \def\cP{{\mathcal{P}}}
 \def\cR{{\mathcal{R}}} \def\cS{{\mathcal{S}}}
   \def\bs{{\mathbf{s}}}
\newcommand{\bef}{\begin{figure}}
\newcommand{\eef}{\end{figure}}
\newcommand{\beq}{\begin{eqnarray}}
\newcommand{\eeq}{\end{eqnarray}}
\title{Reinforcement Learning for Finite-Horizon Restless Multi-Armed Multi-Action Bandits}
\author{
Guojun Xiong\\
SUNY-Binghamton University\\
Binghamton, NY 13902\\
 \texttt{gxiong1@binghamton.edu}
  \And
Jian Li\\
SUNY-Binghamton University\\ 
Binghamton, NY 13902\\
 \texttt{lij@binghamton.edu}
  \And
  Rahul Singh\\
Indian Institute of Science\\
Bengaluru, Karnataka 560012, India\\
 \texttt{rahulsingh@iisc.ac.in}
}
\begin{document}

\maketitle

\begin{abstract}
We study a finite-horizon restless multi-armed bandit  problem with multiple actions, dubbed R$(\text{MA})^2$B.  The state of each arm evolves according to a controlled Markov decision process (MDP), and the reward of pulling an arm depends on both the current state of the corresponding MDP and the action taken.  The goal is to sequentially choose actions for arms so as to maximize the expected value of the cumulative rewards collected. Since finding the optimal policy is typically intractable, we propose a computationally appealing index policy which we call  \textit{Occupancy-Measured-Reward Index Policy}.  Our policy is well-defined even if the underlying MDPs are not indexable. We prove that it is asymptotically optimal when the activation budget and number of arms are scaled up, while keeping their ratio as a constant. For the case when the system parameters are unknown, we develop a learning algorithm. Our learning algorithm uses the principle of optimism in the face of uncertainty and further uses a generative model in order to fully exploit the structure of \textit{Occupancy-Measured-Reward Index Policy}. We call it the R$(\text{MA})^2$B-UCB algorithm. As compared with the existing algorithms, R$(\text{MA})^2$B-UCB performs close to an offline optimum policy, and also achieves a sub-linear regret with a low computational complexity.  Experimental results show that R$(\text{MA})^2$B-UCB outperforms the existing algorithms in both regret and run time.
\end{abstract}

\section{Introduction}\label{sec:intro}

We study a variant of the popular restless multi-armed problem (RMAB)~\cite{whittle1988restless} in which the decision maker has to make choices for only a finite horizon, and can choose from amongst multiple actions for each arm. We call this problem as the restless multi-armed multi-action bandits (R$(\text{MA})^2$B).  A RMAB problem requires a decision maker to choose from amongst a fixed number of competing ``arms'' in a sequential manner. Each arm is endowed with a ``state'' that evolves according to a Markov decision process (MDP)~\cite{puterman1994markov} that is independent of other arms. In the multi-armed bandit (MAB) problem~\cite{gittins1974dynamic}, i.e. the ``rested MAB'' or simply the MAB, states of only those arms evolve that are activated currently, and rewards are generated only from these arms. The goal is to maximize the expected value of the cumulative rewards collected, by choosing the arms in a sequential way. The celebrated \textit{Gittins index policy} \cite{gittins1974dynamic} yields an efficient solution to the MAB. At each time, it assigns an index to each arm, which is a function of the current state of this arm, and then activates the arm with the largest index. However, Gittins index policy is optimal only when the following assumptions hold (i) the MAB problem involves rested bandits; (ii) \textit{only one arm} can be activated at each decision epoch; and (iii) the objective is \textit{infinite-horizon discounted} expected reward.  Whittle  \cite{whittle1988restless} generalized Gittins policy to also allow for the evolution of those arms that are not activated currently (dubbed as ``a changing world setting''), thereby introducing the RMAB problem.  Whittle's setup also allows multiple arms to be activated simultaneously.

The RMAB problem is a very general setup that can be applied to solve a variety of sequential decision making problems ranging from job allocation \cite{nino2007dynamic,jacko2010restless,bertsimas2000restless},  
wireless communication \cite{dai2011non,sheng2014data}, sensor management \cite{mahajan2008multi,ahmad2009optimality} and healthcare \cite{deo2013improving,lee2019optimal,mate2021risk,killian2021beyond}.  However, the RMAB is notoriously \textit{intractable} \cite{papadimitriou1994complexity} and the optimal policy for an RMAB is rarely an index policy.     
To that end, Whittle 
proposed a heuristic policy for the \textit{infinite-horizon} RMAB, which is now called the \textit{Whittle index policy}. 
However, Whittle index policy is well-defined only when the so-called \textit{indexability}~\cite{whittle1988restless} condition is satisfied.  Furthermore, even when an arm is indexable, obtaining its Whittle indices could still be intractable, especially when the corresponding controlled Markov process is convoluted \cite{nino2007dynamic}. Finally, Whittle index policy is \textit{only guaranteed} to be asymptotically optimal 
\cite{weber1990index} under a difficult-to-verify condition which requires that the fluid approximation associated with the ``population dynamics'' of the overall system has a globally asymptotically stable attractor.

Inspired by Whittle's work, many studies focused on finding the index policy for restless bandit problems, e.g., \cite{nino2001restless,verloop2016asymptotically,hu2017asymptotically,zayas2019asymptotically,brown2020index}.  This line of works assumes that the system parameters are known to the decision-maker.~Since in reality the true value of the parameters are unavailable, and possibly time-varying, it is important to examine RMAB from a learning perspective, e.g., \cite{dai2011non,liu2011logarithmic,tekin2011adaptive,liu2012learning,tekin2012online,ortner2012regret,jung2019regret,jung2019thompson,wang2020restless,xiong2021learning}. However, analyzing learning algorithms for RMAB is in general hard due to the uncertainty associated with the learner's knowledge about the system parameters, and secondly since the design of optimal control policy even when the parameter is known, is still unresolved.

Firstly, the existing algorithms such as~\cite{liu2011logarithmic,tekin2011adaptive,tekin2012online,liu2012learning} that are based on the upper confidence bound (UCB) strategy~\cite{auer2002finite} may not perform close to the offline optimum. This is the case because \textit{the baseline policy} in these works is often a heuristic policy that does not have any theoretical performance guarantees. An example of such a heuristic policy is one that pulls only one arm, or a fixed set of arms. Such policies are known to yield highly sub-optimal performance in the RMAB setting, and this makes the $\cO(\log T)$ learning regret~\cite{lattimore2020bandit} less meaningful. Secondly, the aforementioned learning algorithms with a theoretical guarantee of an $\tilde{\cO}(\sqrt{T})$ regret are often \textit{computationally expensive}. For example, the colored-UCRL2 algorithm~\cite{ortner2012regret} suffers from {an exponential computational complexity}, and the regret bound is {exponential in the number of states and arms}.  This is because it needs to solve Bellman equations on a state-space that has a size which grows exponentially with the number of arms.~Thirdly, existing low-complexity policies such as \cite{wang2020restless,xiong2021learning} often do not have a regret guarantee that scales as \textit{$\tilde{\cO}(\sqrt{T})$}, and moreover these also restrict to a specific Markovian model, that are hard to generalize.  In a different line of works, Thompson sampling based algorithms \cite{jung2019regret,jung2019thompson} were used to solve this problem. These provide a theoretical guarantee in the Bayesian setup, but since very often the likelihood functions are complex, these are required to implement a computationally expensive method to update the posterior beliefs.~To the best of our knowledge, there are no provably optimal policies for RMAB problems (let alone the R$(\text{MA})^2$B in consideration) with an efficient learning algorithm that performs \textit{close to the offline optimum} and achieves \textit{a sub-linear regret} and with \textit{a low computational complexity}, all at once.

In this paper, we address the above challenges for R$(\text{MA})^2$B problems that involve operating over a finite time horizon.  In contrast to most of the aforementioned existing literature, that allow for only a binary decision set (activate or not activate), our setup allows the decision maker to choose from multiple actions for each arm. This is a very useful generalization since many applications are not limited to binary actions.  For example, while performing video streaming by transmitting video data packets across wireless channels, the transmitter can dynamically choose varying levels of transmission power or video resolution (i.e., actions), which in turn affects the quality of video streaming experienced by the users. However, the analysis of restless bandits with multiple actions largely remains elusive for the general setting in the literature.  We make progress toward R$(\text{MA})^2$B problems by making the following contributions:

\noindent$\bullet$ \textbf{Asymptotically optimal index policy.} 
For the general finite-horizon R$(\text{MA})^2$B problems in which the system parameters are known, we propose an index policy which we call \textit{Occupancy-Measured-Reward Index Policy}.  We show that our policy is asymptotically optimal, a result paralleling those known for the Whittle policy. 
However, unlike Whittle index policy, our index policy does not require the indexability condition to hold, and is well-defined for both indexable and nonindexable R$(\text{MA})^2$B problems.  This result is significant since the indexability condition is hard to verify or may not hold true in general, and the non-indexable settings have so far received little attention, even though they arise in many practical problems.

\noindent$\bullet$ \textbf{Reinforcement learning augmented index policy.} 
We present one of the first generative model based reinforcement learning augmented algorithm toward an index policy in the context of finite-horizon R$(\text{MA})^2$B problems. We call our algorithm R$(\text{MA})^2$B-UCB. R$(\text{MA})^2$B-UCB consists of a novel optimistic planning step similar to the UCB strategy, in which it obtains an estimate of the model by sampling state-action pairs in an offline manner and then solves a so-called extended linear programming problem that is posed in terms of certain ``occupancy measures''. The complexity of this procedure is linear in the number of arms, as compared with exponential complexity of the state-of-the-art colored-UCRL2 algorithm.  Furthermore, we show that R$(\text{MA})^2$B-UCB achieves a $\tilde{\cO}(\sqrt{T})$ regret and hence performs close to the offline optimum policy since it contains an efficient exploitation step enabled by the optimistic planning used by our \textit{Occupancy-Measured-Reward Index Policy}.  This significantly outperforms other existing methods~\cite{liu2011logarithmic,tekin2011adaptive,tekin2012online,liu2012learning} that often rely upon a heuristic policy.  Moreover, the multiplicative ``pre-factor" that goes with the time-horizon dependent function in the regret is quite low for our policy since the ``exploitation step'' that we propose is much more efficient, in fact this is ``exponentially better'' than that of the colored-UCRL2.  Our simulation results also show that  R$(\text{MA})^2$B-UCB outperforms existing algorithms in both regret and running time.

\noindent\textbf{Notation.} We denote the set of natural and real numbers by $\mathbb{N}$ and $\mathbb{R}$, respectively.  We let $T$ be the finite number of total decision epochs (time).  We denote the cardinality of a finite set $\cA$ by $A:=|\cA|.$ We also use $[N]$ to represent the set of integers $\{1,\cdots, N\}$ for $N\in\mathbb{N}.$

\section{System Model}\label{sec:model}
We begin by describing the finite-horizon R$(\text{MA})^2$B problem in which the action set for each of the $N$ arms is allowed to be non-binary.  Each arm $n$ is described by a unichain Markov decision process (MDP) \cite{kallenberg2003finite} $(\cS_n, \cA_n, P_n, r_n, \bold{s}_1, T)$, where $\cS_n$ is its finite state space, $\cA_n$ is the set of finite actions, $P_n:\cS_n\times\cA_n\times\cS_n\mapsto\mathbb{R}$ is the transition kernel and $r_n: \cS_n\times\cA_n\mapsto \mathbb{R}$ is the reward function.  For the ease of readability, we assume  that all arms share the same state and action spaces, and these are denoted by $\cS$ and $\cA$, respectively.  Our results and analysis can be extended in a straightforward manner to the case of different state and action spaces, though this will increase the complexity of notation.

Without loss of generality, we denote the action set as $\cA=\{0, 1,\cdots, A\}$ where $A<\infty$.  By using the standard terminology from the RMAB literature, we call an arm \textit{passive} when action $a=0$ is applied to it, and \textit{active} otherwise. An \textit{activation cost} of $a$ units is incurred each time an arm is applied action $a$ (thus not activating the arm $a=0$ corresponds to $0$ activation cost). The total activation cost associated with activating a subset of the $N$ arms at each time $t$ is constrained by $K$ units. The quantity $K$ is called the \textit{activation budget.} The initial state is chosen according to the initial distribution $\bold{s}_1$ and $T<\infty$ is the operating time horizon.  

We denote the state of arm $n$ at time $t$ as $s_n(t)\in \cS$.  The process $s_n(t)$ evolves as a controlled Markov process with the conditional probability distribution of $s_n(t+1)$ given by $P_n(s_n(t),a_n(t), s_n(t+1))$ (almost surely). The instantaneous reward earned at time $t$ by activating  arm $n$ is denoted by a random variable $r_n(t) : = r_n(s_n(t),a_n(t))$.  Without loss of generality,  we assume that $r_n(s_n,a_n)\in[0,1]$, $\forall n$ with expectation $\mathbb{E}r_n(t)=\bar{r}_n(s,a)$ \cite{efroni2020exploration}, and let $r_n(s,0)$ be $0$ $\forall s\in\cS$, i.e., no reward is earned when the arm is passive.  Denote the total reward earned at time $t$ by $R(t)$, i.e., $R(t):= \sum_n r_n(t)$.   
Let $\cF_t$ denote the operational history until $t$, i.e., the sigma-algebra \cite{shiryaev2007optimal} generated by the random variables $ \{s_n(\ell):n\in [N],\ell \in [t] \}, \{a_n(\ell):n\in [N],\ell \in [t-1] \}$. Our goal is to derive a policy $\pi: \cF_t \mapsto \cA^{N},~t=1,2,\ldots$, that makes decisions regarding which set of arms to activate at each time $t\in[T]$, so as to maximize the expected value of the cumulative rewards subject to a budget constraint on the activation resource, i.e., 
\begin{align}\label{obj-cons}
\max_{\pi}~~\mathbb{E}_\pi \left(\sum_{n=1}^{N} \sum_{t=1}^{T} r_n(t)\right)\quad \mbox{s.t.}~\sum_{n=1}^{N} a_n(t)\leq K,\quad\forall t\in [T],
\end{align}
where the subscript indicates that the expectation is taken with respect to the measure induced by the policy $\pi.$  We refer to the problem \eqref{obj-cons} as the ``original problem''. Though this could be solved by using existing techniques such as dynamic programming~\cite{hernandez2012further}, existing approaches  suffer from the ``curse of dimensionality'' \cite{bellmanbook,bertsekas1995dynamic}, and hence are computationally intractable.   
We overcome this difficulty by developing a computationally feasible and provably optimal index-based policy.

\section{Asymptotically Optimal Index Policy}\label{sec:truncated-policy}

In this section, we focus on the scenario when the controlled transition probabilities and the reward functions of each arm are known. We design an index policy for the finite-horizon R$(\text{MA})^2$Bs, and show that it is asymptotically optimal. We begin by introducing a certain ``relaxed problem''~\cite{whittle1988restless}. The relaxed problem can be solved efficiently since it can equivalently be posed as a linear programming (LP) in the space of occupation measures of the $N$ controlled Markov processes~\cite{altman1999constrained}, where each such process corresponds to one arm. This forms the building block of our proposed index-based policy, and is described next.

\subsection{The Relaxed Problem}
Consider the following problem obtained by relaxing the ``hard'' constraint in~\eqref{obj-cons} in which the activation cost at each time $t\in[T]$ is limited by $K$ units, by a ``relaxed'' constraint in which this is supposed to be true only in an expected sense, i.e.,
\begin{align}\label{eq:obj_con_rel}
\max_{\pi}~~ \mathbb{E}_{\pi} \left(\sum_{n=1}^{N} \sum_{t=1}^{T} r_n(t)\right)\quad
\mbox{s.t.}~\mathbb{E}_{\pi}\left\{\sum_{n=1}^{N} a_n(t) \right\}\leq K,\quad \forall t\in[T]. 
\end{align}
Obviously the optimal value of the relaxed problem \eqref{eq:obj_con_rel} yields an upper bound on the optimal value  of~\eqref{obj-cons}. We note that an optimal policy for \eqref{eq:obj_con_rel} might require randomization \cite{altman1999constrained}. It is well known \cite{altman1999constrained} that the relaxed problem \eqref{eq:obj_con_rel} can be reduced to a LP in which the decision variables are the occupation measures of the controlled process.  
More specifically, the occupancy measure $\mu$ of a policy $\pi$ of a finite-horizon MDP describes the probability with which state-action pair $(s,a)$ is visited at time $t$. Formally,
\begin{align*}
    \mu= \left\{\mu_n(s,a;t)=\mathbb{P}(s_n(t)=s, a_n(t)=a): \forall n\in[N], t\in[T]\right\}.
\end{align*}
The relaxed problem~\eqref{eq:obj_con_rel} can be reformulated as the following LP \cite{altman1999constrained} in which the decision variables are these occupation measures: 
\begin{align}
\max_{\mu}&~~\sum_{n=1}^{N}\sum_{t=1}^T\sum_{(s,a)} \mu_n(s,a;t)\bar{r}_n(s,a)\label{eq:obj_rel_lp} \displaybreak[0]\\
\hspace{-0.1cm}\mbox{ s.t. }& ~~{ \sum_{n=1}^{N}\sum_{(s,a)} a\mu_n(s,a;t)\leq K}, \quad\forall t\in[T],
\label{eq:con_rel_lp}\displaybreak[1]\\
&~~ {\sum_{a}} \mu_n(s,a;t)=\sum_{(s^\prime,a^\prime)}\mu_n(s^\prime, a^\prime; t-1)P_n(s^\prime, a^\prime,s),\quad n\in[N], t\in[T],\label{eq:pmc_rel_lp1}\displaybreak[3]\\
&~~\sum_{a}\mu_n(s,a;1)={\bold{s}_1(s)}, 
~\forall s\in\cS,
\label{eq:pmc_rel_lp2}    
\end{align}
where \eqref{eq:con_rel_lp} is a restatement of the constraint in \eqref{eq:obj_con_rel} for $\forall t\in[T]$, which indicates the activation budget; 
\eqref{eq:pmc_rel_lp1} represents the transition of the occupancy measure from time  $t-1$ to time $t$, $\forall n\in [N]$ and $\forall t\in[T]$; and \eqref{eq:pmc_rel_lp2} indicates the initial condition for occupancy measure at time $1$, $\forall s\in\cS$.
From the constraints~(\ref{eq:pmc_rel_lp1})-(\ref{eq:pmc_rel_lp2}), it can be easily checked that the occupancy measure satisfies $\sum_{s,a}\mu_n(s,a,t)=1$, $
\forall t\in[T]$. Thus, the occupancy measure $\mu_n, \forall n\in[N]$ is a probability measure.

An optimal policy for \textit{the relaxed problem} can be obtained from the solution of this LP as follows~\cite{altman1999constrained}. Let $\mu\ust = \left\{\mu\ust_{n}(s,a;t):n\in [N], t\in [T]  \right\}$ be a solution of the above LP.  Construct the following Markovian non-stationary randomized  policy 
$\chi\ust=\{\chi\ust_n(t): n\in[N], t\in[T]\}$ as follows: 
if the state $s_n(t)$ is $s$ at time $t$, then $\chi\ust_{n}{(t)}$ chooses an action $a$ with a probability equal to 
\begin{align}
\chi\ust_n(s,a;t):=
\frac{\mu\ust_n(s,a;t)}{\sum\limits_{a\up\in \cA}\mu\ust_n(s,a\up; t)}.
\label{def:policy}
\end{align}
If the denominator of \eqref{def:policy} equals zero, i.e., state $s$ for arm $n$ is not reachable at time $t$, arm $n$ can be simply made passive, i.e., $\chi\ust_n(s,0;t)=1$ and $\chi\ust_n(s,a;t)=0, \forall a\in\mathcal{A}\setminus \{0\}.$

\subsection{The Occupancy-Measured-Reward Index Policy}
The Markov policy $\chi\ust$ constructed from solutions to the above LP form the building block of our index policy for the original problem \eqref{obj-cons}. Note that the policy~(\ref{def:policy}) is not always feasible for \textit{the original problem} since in the latter at most $K$ units of activation costs can be consumed at any time, while (\ref{def:policy}) could spend more than $K$ units of costs at any given time. To this end, our index policy assigns an index to each arm based on its current state and the current time.  We denote by $\psi_n(s_n(t);t)$ the index associated with arm $n$ at time $t$, \begin{align}\label{eq:occupancy-index}
    \psi_n(s_n(t);t):=\sum_{a\in\mathcal{A}\setminus \{0\}}\chi\ust_n(s_n(t),a;t)\bar{r}_n(s_n(t),a), 
\end{align}
where $\chi\ust_n(s_n(t),a;t)$ is defined in \eqref{def:policy}. We call this the \textit{occupancy-measured-reward index (OMR index)}  since it is based solely upon the optimal occupancy measure derived by solving the LP~\eqref{eq:obj_rel_lp}-\eqref{eq:pmc_rel_lp2} and the mean reward, representing the expected obtained reward for arm $n$ at state $s_n(t)$ of time $t$.   Let $\psi(t):=\{\psi_n(s_n(t);t): n\in[N]\}$ be the OMR indices associated with $N$ arms at time $t$. 
Let $a\ust_n(s_n(t);t)$ be the action for arm $n$ in its current state $s_n(t)$ at time $t$, and let $\mathcal{B}(t)$ be the set of arms that are active arms at time $t$.  Our index policy then activates arms with OMR indices in a decreasing order.  
The choice of $\cB(t)$ satisfies the
constraint $\sum_{n\in\cB(t)} a\ust_n(s_n(t);t)\leq K$. The remaining arms $[N]\setminus\mathcal{B}(t)$ are kept passive at time $t$. For each arm that has been chosen to be activated, the action applied to it is selected randomly according to the probability  $\chi\ust_n(s_n(t),a;t)$~\eqref{def:policy}. When multiple arms sharing the same OMR indices,  we randomly activate one arm and allocate the remaining activation costs across all possible actions according to the probability $\chi\ust_n(s_n(t),a;t)$.  If it happens that the indices of all the arms are zero, then all the remaining arms are made passive. 
We call this \textit {an Occupancy-Measured-Reward Index Policy (\textit{OMR Index Policy})}, and denote it as $\pi\ust=\{\pi_n\ust, n\in[N]\}$, which is summarized in Algorithm \ref{alg:Trun_IP}.

\begin{algorithm}
	\caption{\textit{OMR Index Policy}}
	\label{alg:Trun_IP}
\textbf{Input}: 
	Initialize $s_n(1)$ and $\cB(t)$ as an empty set $\forall n\in[N],t\in[T].$
	\begin{algorithmic}[1]
	 	\STATE Construct the LP according to \eqref{eq:obj_rel_lp}-\eqref{eq:pmc_rel_lp2} 
	 	and solve the occupancy measure $\mu$; 
			\STATE Compute $\chi_n^\star(s,a,t), { \forall s,} a, t$ according to  \eqref{def:policy};
		\STATE Construct the index set $\psi(t):=\{\psi_n(s_n(t);t): n\in[N]\}$; and sort $\psi(t)$ in a decreasing order; \label{step:algo}
		\WHILE{$\sum_{n\in\cB(t)}a\ust_n(s_n(t);t)\leq K$}
			\STATE Activate arms according to the order in step \ref{step:algo} and randomly generate a feasible action according to the distribution $\chi_n^\star(s,a,t)$. Store the newly activated arm $n$ into $\cB(t)$;
		\ENDWHILE
	\end{algorithmic}
\end{algorithm}

\begin{remark}
Our index policy is computationally appealing since it is based only on the ``relaxed problem'' by solving a LP.  Furthermore, if all arms share the same MDP, the LP can be decomposed across arms as in \cite{whittle1988restless}, and hence the computational complexity does not scale with the number of arms.  Even more importantly, our index policy is well-defined even if the underlying MDPs are not indexable~\cite{whittle1988restless}. This is in contrast to most of the existing Whittle index-based policies that are only well defined in the case that the system is indexable, which is hard to verify and may not hold in general.  A line of works \cite{verloop2016asymptotically,hu2017asymptotically,zhang2021restless} have been focusing on designing index policies without the indexability requirement, and closest to our work is the parallel work on restless bandits \cite{zhang2021restless} with known transition probabilities and reward functions.  In particular, \cite{zhang2021restless} explores index policies that are similar to ours, but under the assumption that the individual MDPs of each arms are homogeneous. They consider the binary action setup, and focus mainly on characterizing the asymptotic sub-optimality gap.  Our index policy in this section can be seen as the complement to it by considering the general case of heterogeneous MDPs in which multiple actions are allowed for each arm.  Finally, reinforcement learning augmented index policy and the regret analysis in next section also distinguishes our work.  
\end{remark}

\subsection{Asymptotic Optimality}
For the abuse of notation, we let the number of arms be $\rho N$ and the value of activation constraint be $\rho K$ in the limit with $\rho\rightarrow\infty.$ In other words, it represents the scenarios where there are $N$ different classes of arms and each class contains $\rho$ arms. 
Our \textit{OMR Index Policy} achieves asymptotic optimality when the number of arms $\rho N$ and the activation constraint $\rho K$
go to infinity while holding $\alpha=K/N$ constant\footnote{We consider the asymptotic optimality in the same limit as by Whittle \cite{whittle1988restless} and others \cite{weber1990index,verloop2016asymptotically,hu2017asymptotically,zayas2019asymptotically}.}.  Let $R(\pi, \rho K, \rho N)$ denote the expected reward of the original problem~(\ref{obj-cons}) obtained by an arbitrary policy $\pi$ in this limit. 
Denote the optimal policy of the original problem~(\ref{obj-cons}) as $\pi^{opt}$. 

\begin{theorem}\label{thm:asym_opt}
The \textit{OMR Index Policy} achieves the asymptotic optimality as follows
\begin{align*}
\lim_{\rho\rightarrow \infty} \frac{1}{\rho}\Big(R(\pi^\star, \rho K, \rho N)- R(\pi^{opt}, \rho K, \rho N)\Big)=0.
\end{align*}
\end{theorem}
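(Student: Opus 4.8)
The plan is to follow the classical fluid/mean-field argument pioneered by Whittle and Weber--Weiss, adapted to the finite-horizon, multi-action setting. Recall that the LP \eqref{eq:obj_rel_lp}--\eqref{eq:pmc_rel_lp2} provides an upper bound on the optimal value of the original problem \eqref{obj-cons}, and this bound scales linearly with $\rho$; call its value $\rho \cdot V\up_{LP}$ per class. So it suffices to show that $\frac{1}{\rho} R(\pi\ust, \rho K, \rho N) \to V\up_{LP}$ as $\rho \to \infty$. The key object is the empirical measure $\hat\mu_n(s,a;t)$ counting the fraction of arms of class $n$ that are in state $s$ and receive action $a$ at time $t$ under the \emph{OMR Index Policy}. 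First I would show that, in expectation, $\hat\mu_n(\cdot,\cdot;t)$ concentrates around the LP-optimal occupancy measure $\mu\ust_n(\cdot,\cdot;t)/$(normalized), because each arm independently follows the Markov randomized policy $\chi\ust_n$ \emph{except} on the event that the index policy's greedy activation step runs out of budget and forces some would-be-active arms to be passive.

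The heart of the argument is controlling that ``budget overflow'' event. Since the arms within a class are exchangeable and evolve i.i.d.\ given the policy, the number of arms that $\chi\ust$ would like to activate at time $t$ is a sum of $\rho N$ independent Bernoulli-type contributions with mean (by the LP constraint \eqref{eq:con_rel_lp}) at most $\rho K$. By a law-of-large-numbers / Hoeffding concentration argument, the probability that the realized activation demand exceeds $\rho K$ by more than $\eps \rho$ decays (e.g.\ exponentially in $\rho$), uniformly over the $T$ time steps via a union bound. On the complement event, the \emph{OMR Index Policy} activates \emph{every} arm that $\chi\ust$ designates, so the realized trajectory of every arm coincides with an i.i.d.\ sample path of the Markov chain induced by $\chi\ust_n$; hence $\bE \hat\mu_n(s,a;t)$ equals the $\chi\ust$-induced occupancy up to an $O(\eps) + O(e^{-c\rho})$ error, and this propagates forward in $t$ by induction on the horizon (here finiteness of $T$ is what makes the induction clean and avoids the stable-attractor condition Weber--Weiss needed in the infinite-horizon case). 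Summing the per-arm expected rewards $\sum_{s,a} \hat\mu_n(s,a;t) \bar r_n(s,a)$ over $n$ and $t$, taking expectations, and using $r_n \in [0,1]$ to bound the contribution on the bad event, gives $\frac{1}{\rho} R(\pi\ust,\rho K,\rho N) \ge V\up_{LP} - O(\eps) - O(T e^{-c\rho})$. Letting $\rho\to\infty$ and then $\eps\to 0$ closes the gap, and combined with the LP upper bound $R(\pi\ust) \le R(\pi^{opt}) \le \rho V\up_{LP}$ this yields the claimed limit.

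I expect the main obstacle to be the bookkeeping in the forward induction over $t$: the empirical occupancy at time $t+1$ depends on the \emph{realized} actions at time $t$, which on the rare bad event deviate from $\chi\ust$, so one must carefully show the errors do not compound catastrophically across the horizon --- a clean way is to couple the \emph{OMR Index Policy} trajectory with an ``idealized'' system in which every arm always follows $\chi\ust_n$ regardless of budget, bound the per-step coupling failure probability by the overflow estimate above, and note that once coupled the two systems agree forever after within a time step. A secondary subtlety is the tie-breaking and fractional-budget-allocation rules in Algorithm \ref{alg:Trun_IP}: one must check these contribute only $O(1/\rho)$ per time step (a single arm's worth of reward) and hence vanish after dividing by $\rho$. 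Everything else --- the LP duality/upper bound, the Hoeffding bound, the union bound over $[T]$ --- is routine.
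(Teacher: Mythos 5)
Your plan is correct and follows essentially the same route as the paper: the paper's Lemma~\ref{lemma1} is exactly your concentration step, proving by induction over the finite horizon (via the strong law of large numbers and an explicit case analysis of the index-priority budget allocation) that the per-class empirical state/action fractions converge to the LP-optimal occupancy measure $\mu\ust$, after which the claim follows by sandwiching $R(\pi\ust)$ between the LP value and $R(\pi^{opt})$. Just tighten the sentence claiming that on the complement of the overflow event the policy activates \emph{every} arm designated by $\chi\ust$ --- when the budget constraint is tight the realized demand exceeds $\rho K$ with non-vanishing probability, so up to $\eps\rho$ designated arms are still forced passive and this is precisely the $O(\eps)$ error you already carry through the induction.
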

\begin{remark}
Theorem \ref{thm:asym_opt} indicates that as the number of per-class arms (i.e., $\rho$) goes to infinity, the gap between the performance achieved by our \textit{OMR Index Policy} and the optimal policy $\pi^{opt}$ is bounded, and thus per arm gap tends to be zero. 
\end{remark}

\section{Reinforcement Learning for the \textit{OMR Index Policy} }\label{sec:learning}

Computing the \textit{OMR Index Policy} requires the knowledge of the controlled transition probabilities and the reward functions associated with the MDPs of each arm. Since these quantities are typically unknown, we propose a generative model based reinforcement learning (RL) augmented algorithm that learns this policy.

\subsection{The Learning Problem}
The setup is similar to the finite-horizon R(MA)$^2$B described earlier, in which each arm is associated with a controlled MDP $(\cS, \cA, P_n, r_n, \bold{s}_1, T)$.  The only difference is that now the agent does not know the quantities $P_n,r_n$. To judge the performance of the learning algorithm, we use the popular metric of learning regret~\cite{lattimore2020bandit}.  Let $\xi(\pi, \bs_1):=\lim_{T\rightarrow \infty}\frac{1}{T}\mathbb{E}[R(\pi, \bs_1, T)],$ be the average value of expected rewards, and denote the optimal average reward rate by $\xi^{opt}:=\sup_\pi \xi(\pi,\bs_1)$. Note that the optimal average reward rate is independent of the initial state for MDPs that have a finite diameter \cite{puterman1994markov}.

The regret of a policy $\pi$ is defined as follows,
$$
\Delta(\pi,\bs_1, T):=T\xi^{opt}-\mathbb{E}_\pi[R(\pi, \bs_1, T)],
$$
where
$$
R(\pi, \bs_1, T):=\sum_{t=1}^{T}r(t),
$$
is the cumulative rewards collected when the system begins in state $\bs_1$. Thus, regret measures the difference between the rewards collected by the learning policy, and the optimal stationary policy that could be implemented if the system parameters were known to the agent.

\subsection{A Generative Model Based Learning Algorithm}
Our proposed RL algorithm is based on the UCB strategy~\cite{auer2002finite,lattimore2020bandit}, and also uses a generative model similar to \cite{kearns2002sparse}. We call our RL algorithm as R(MA)$^2$B-UCB policy, and depict it in Algorithm~\ref{alg:UCB}.

There are two phases in  R(MA)$^2$B-UCB: (i) a planning phase, and (ii) a policy execution phase.  The planning phase (lines 1-6 in Algorithm~\ref{alg:UCB}) constructs a confidence ball that contains a set of plausible MDPs for each of the $N$ arms. Specifically, we explore a generative approach with a single step simulator that can generate samples of the next state and reward given any state and action 
\cite{kearns2002sparse,hasanzadezonuzy2020learning}. 
It then obtains an optimistic estimate of the true MDP parameters by solving an optimistic planning problem in which the agent can choose MDP parameters from the confidence ball.  This problem can be posed as a LP in which the decision variables are the occupancy measures corresponding to the processes associated with $N$ arms. We can then define the corresponding \textit{OMR Index Policy}.  The planning problem, referred to as an \textit{extended LP} in Algorithm~\ref{alg:UCB} is described below.  Our key contribution here is to choose the right value of $\Lambda(T)$ to balance the accuracy and complexity, which contributes to the properties of sub-linear regret and low-complexity of R(MA)$^2$B-UCB. 

At the policy execution phase (line 7 in Algorithm~\ref{alg:UCB}), the derived \textit{OMR Index Policy} is executed. 
Our key contribution here is to leverage our proposed \textit{OMR Index Policy}, rather than using heuristic ones as in existing algorithms. Since our proposed \textit{OMR Index Policy} is near-optimal, this guarantees that R(MA)$^2$B-UCB performs close to the offline optimum. Moreover, this contributes to the low multiplicative ``pre-factor'' that goes with the time-horizon dependent function in the regret. The prefactor of our algorithm is exponentially better than that of the state-of-the-art colored-UCRL2.

\begin{algorithm}[t]
	\caption{ R$(\text{MA})^2$B-UCB Policy  }
	\label{alg:UCB}
	\textbf{Input}: Learning horizon $T$, and
	learning counts $\Lambda(T)<T$.   
	\begin{algorithmic}[1]
		\FOR{$n=1,2,...,N$ and $(s,a)\in\cS\times\cA$}
		\STATE Sample pairs $(s,a)$ of arm $n$ for $\Lambda(T)$ times. 
		\ENDFOR
		\STATE Construct $ \cP_n(s,a)$ and $\cR_n(s,a)$ according to \eqref{eq:confidence_ball}; 
			\STATE Compute the optimal solution of the extended LP \eqref{eq:UCB_extended};
		\STATE Establish the corresponding \textit{OMR Index Policy} $\pi^\star$;
			\STATE Execute $\pi^\star$ for the rest of the game. 
	\end{algorithmic}
\end{algorithm}

\noindent\textbf{Optimistic planning.} We sample each state-action pair of arm $n$ for $\Lambda(T)$ (the value of $\Lambda(T)$ will be specified later) number of times uniformly across all state-action pairs.  We denote the number of times that a transition tuple $(s,a,s^\prime)$ was observed within $\Lambda(T)$ as $T_n(s,a,s^\prime),$ satisfying 
\begin{align*}
    T_n(s,a, s^\prime)=\sum_{h=1}^{\Lambda(T)}\pmb{1}(s_n({h+1})=s^\prime|s_n({h})=s,a_n({h})=a), \quad\forall (s,a,s^\prime)\in\cS\times\cA\times\cS,
\end{align*}
where $s_n(h)$ represents the state for arm $n$ at time $h$ and $a_n(h)$ is the corresponding action.  Then R(MA)$^2$B-UCB estimates the true transition probability $\forall (s,a,s^\prime)\in\cS\times\cA\times\cS$ and the true reward $\forall (s,a)\in\cS\times\cA$ by the corresponding empirical averages as
\begin{align*}
    \hat{P}_n(s^\prime|s,a)&=
\frac{T_n(s,a, s^\prime)}{\Lambda(T)},\\
\hat{r}_n(s,a)&=
\frac{1}{\Lambda(T)}\sum_{h=1}^{\Lambda(T)}r_n(s,a;h)\pmb{1}(s_n({h})=s, a_n({ h})=a), \quad\forall (s,a)\in\cS\times\cA. 
\end{align*}
R(MA)$^2$B-UCB further defines confidence intervals for the transition probabilities (resp. the rewards), such that the true transition probabilities (resp. true rewards) lie in them with high probability.   Formally, for $\forall(s,a)\in\cS\times\cA$, we define
\begin{align}\label{eq:confidence_ball}
\cP_n(s,a)&:=\{\tilde{P}_n(s^\prime|s,a), \forall s^\prime\in\cS: |\tilde{P}_n(s^\prime|s,a)-\hat{P}_n(s^\prime|s,a)|\leq {\delta_n(s,a)}\}, \nonumber\\
\cR_n(s,a)&:=\{\tilde{r}_n(s,a): \tilde{r}_n(s,a)=\hat{r}_n(s,a)+\delta_n(s,a)\}, 
\end{align}
where the size of the confidence intervals $\delta_n(s,a)$ is built using the empirical Hoeffding inequality \cite{maurer2009empirical}.  For any $(s,a,s^\prime)\in\cS\times\cA\times\cS$, and $\eta\in(0,1),$ it is defined as 
\begin{align}
    \delta_n(s,a)=\sqrt{\frac{1}{2\Lambda(T)}\log\bigg(\frac{SAN\Lambda(T)}{\eta}\bigg)}.
\end{align}

The set of plausible MDPs associated with the confidence intervals is $\cM=\{M_n=(\cS, \cA, \tilde{P}_n, \tilde{r}_n): \tilde{P}_n(\cdot|s,a)\in\cP_n(s,a), {\tilde{r}_n(s,a)\in\cR_n(s,a)}, \forall n\}$. {Then R(MA)$^2$B-UCB computes a policy by performing optimistic planning.}  Given the set of plausible MDPs, it selects an optimistic transition (resp. reward) function and an optimistic policy by solving a ``modified LP'', which is similar to the LP defined in~(\ref{eq:obj_rel_lp})-(\ref{eq:pmc_rel_lp2}), but with the transition and reward functions replaced by $\tilde{P}(\cdot|\cdot,\cdot)$ and $\tilde{r}(\cdot,\cdot)$ in the confidence balls~(\ref{eq:confidence_ball}) since the corresponding true values are not available. More precisely, R(MA)$^2$B-UCB finds an optimal solution to the following problem
\begin{align}\label{eq:UCBOpt}  
 \underset{{M_n\in\cM}}{\max}& \sum_{t=1}^T\sum_{n=1}^{N}\sum_{(s,a)} \mu_n(s,a;t)\tilde{r}_n(s,a) \nonumber \displaybreak[0]\\
\mbox{ s.t. } & \sum_{n=1}^{N}\sum_{(s,a)} a\mu_n(s,a;t) \leq K,\quad\forall t\in[T],\nonumber\displaybreak[1]\\
& {\sum_{a}} \mu_n(s,a;t)=\sum_{(s^\prime,a^\prime)}\mu_n(s^\prime, a^\prime, t-1)\tilde{P}_n(s|s^\prime, a^\prime),\quad\forall n\in[N],t\in[T],\nonumber\displaybreak[2]\\
& \sum_{a}\mu_n(s,a,1)={\bold{s}_1(s)}, 
\quad\forall s\in\cS.
\end{align}

\noindent\textbf{The extended LP problem.}
The modified LP can be further expressed as an extended LP by leveraging the state-action-state occupancy measure $z_n(s, a, s^\prime, t)$ defined as $z_n(s, a, s^\prime, t)=P_n(s^\prime|s,a)\mu_n(s,a;t)$ to express the confidence intervals of the transition probabilities.  The extended LP over $z=\{z_n\}$ is as follows:
\begin{align}\label{eq:UCB_extended}
\max& \sum_{n=1}^{N}\sum_{t=1}^T\sum_{(s,a,s^\prime)} z_n(s,a, s^\prime;t)\tilde{r}_n(s,a)\nonumber\\
\mbox{ s.t. } & \sum_{n=1}^{N}\sum_{(s,a,s^\prime)} z_n(s,a,s^\prime;t)a\leq K,\quad\forall t\in[T],\nonumber\displaybreak[0]\\
& {\sum_{a, s^\prime}} z_n(s,a, s^\prime;t)=\sum_{s^\prime, a^\prime}z_n(s^\prime, a^\prime, s, t-1),\quad\forall t\in[T], \nonumber\displaybreak[1]\\
& {\sum_{a, s^\prime}} z_n(s,a, s^\prime;1)=\bold{s}_1(s), \quad\forall s\in\cS, \nonumber\displaybreak[2]\\
&\frac{z_n(s,a,s^\prime;t)}{\sum_y z_n(s,a,y;t)}-(\hat{P}_n(s^\prime|s,a)+\delta_n(s,a))\leq0, \quad\forall (s,a,s^\prime,t)\in\cS\times\cA\times\cS\times[T],\nonumber\displaybreak[3]\\
&\hspace{-0.2cm}-\frac{{z_n(s,a,s^\prime;t)}}{\sum_y z_n(s,a,y;t)}+(\hat{P}_n(s^\prime|s,a)-\delta_n(s,a))\leq 0,\quad\forall (s,a,s^\prime,t)\in\cS\times\cA\times\cS\times[T],
\end{align}
where the last two constraints indicate that the transition probabilities lie in the desired confidence interval for $\forall (s,a,s^\prime,t)\in\cS\times\cA\times\cS\times[T]$. Such an approach was also used in \cite{jin2019learning,rosenberg2019online} in the context of adversarial MDPs and \cite{efroni2020exploration,kalagarla2020sample,hasanzadezonuzy2020learning} in constrained MDPs. Once we compute $z$ from~(\ref{eq:UCB_extended}), the policy is recovered from the computed occupancy measures as 
\begin{align}\label{eq:recovered-policy}
\chi_n(s,a;t)=\frac{\sum_{s^\prime}z_n(s,a,s^\prime;t)}{\sum_{b,s^\prime}z_n(s,b,s^\prime;t)}. 
\end{align}
Finally, we compute the  \textit{OMR index} as in~(\ref{eq:occupancy-index}) using~(\ref{eq:recovered-policy}), from which we construct the \textit{OMR Index Policy}, and execute this policy to the end.

\begin{remark}\label{remark-rmabucb}
Although R(MA)$^2$B-UCB looks similar to an ``explore-then-commit'' policy~\cite{ortner2012regret}, a key novelty of R(MA)$^2$B-UCB lies in using the approach of \textit{optimism-in-the-face-of-uncertainty} \cite{jaksch2010near,mete2021reward} to {balance exploration and exploitation in a non-episodic offline manner.}  As a result, there is no need for R(MA)$^2$B-UCB to episodically search for a new MDP instance within the confidence ball with a higher reward as in \cite{ortner2012regret,wang2020restless}, which is computationally expensive (i.e., exponential in the number of arms). The second key novelty is that R(MA)$^2$B-UCB only relies on {samples initially obtained by a generative model to construct a upper-confidence ball}, using which a policy can be derived by solving an extended LP just once, with a computational complexity of $\cO(NSAT)$ (which is $\cO(SAT)$ if all arms are identical).  However, the existing algorithms, e.g. colored UCRL2 are computationally expensive as they rely upon a complex recursive Bellman equation 
in order to derive the policy. Finally, R(MA)$^2$B-UCB uses the structure of our proposed near-optimal index policy in the policy execution phase rather than using a heuristic one as in existing algorithms e.g., \cite{liu2011logarithmic,tekin2011adaptive,tekin2012online,liu2012learning}. These key features ensure that R(MA)$^2$B-UCB achieves almost the same performance as the offline optimum, a sub-linear regret at a low computation expense.
\end{remark}

\subsection{Regret Bound}\label{sec:regret}

We present our main theoretical results in this section.

\begin{theorem}\label{thm:regret}
The regret of the R(MA)$^2$B-UCB policy with $\Lambda(T)=\mathcal{O}(T^{1/2})$ satisfies:
\begin{align}\label{eq:regret}
 \Delta(\pi^\star,\bs_1, T)= \cO\Big({\left(SAK+2K(1+\eta)\right)\sqrt{T}}\Big). 
\end{align}
\end{theorem}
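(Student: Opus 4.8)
The plan is to combine an optimism argument with a decomposition of the regret into a model-estimation term and a budget-relaxation term, and then to fix $\Lambda(T)$ so as to balance the reward foregone during planning against the estimation error. \emph{Good event.} I would first show that the confidence sets are simultaneously valid with high probability: let $\cG$ be the event that $P_n(\cdot\mid s,a)\in\cP_n(s,a)$ and $r_n(s,a)\le\tilde r_n(s,a)\le r_n(s,a)+2\delta_n(s,a)$ for every arm $n\in[N]$ and every $(s,a)\in\cS\times\cA$. By the empirical Hoeffding inequality used to define $\delta_n(s,a)$, together with a union bound over the $N$ arms, the $SA$ state--action pairs, the $S$ successor states, and the (at most $\Lambda(T)$) sampling counts, one gets $\mathbb{P}(\cG)\ge 1-\eta$ — which is exactly why $SAN\Lambda(T)/\eta$ sits inside the logarithm in $\delta_n(s,a)$. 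On $\cG^c$ I bound the regret trivially by at most $K$ per step, contributing $\cO(\eta KT)$, and pick $\eta$ so this is lower order.

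\emph{Optimism.} On $\cG$, the true family of MDPs $(M_1,\dots,M_N)$ is feasible for the extended LP \eqref{eq:UCB_extended}: its state--action--state occupancy measures satisfy every constraint, the last two transition constraints holding precisely because $P_n(\cdot\mid s,a)\in\cP_n(s,a)$. Hence the optimal value $\widetilde L^\star$ of \eqref{eq:UCB_extended} is at least the optimal value of the relaxed LP \eqref{eq:obj_rel_lp}--\eqref{eq:pmc_rel_lp2} evaluated at the true parameters, which in turn upper bounds the value of the original problem \eqref{obj-cons}; combined with the standard comparison between the finite-horizon optimal value and $T\xi^{opt}$ (up to an additive bias/span term that is lower order), this yields $\widetilde L^\star\ge T\xi^{opt}$ on $\cG$. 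Therefore, on $\cG$,
\[
\Delta(\pi^\star,\bs_1,T)\ \le\ \widetilde L^\star-\mathbb{E}_{\pi^\star}\!\big[R(\pi^\star,\bs_1,T)\big].
\]

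\emph{Decomposition and bounds.} Let $\chi$ be the randomized Markov policy recovered from the extended-LP solution through \eqref{eq:recovered-policy}, and let $V_\chi(M)=\sum_n\sum_t\sum_{s,a}\mu_n^{\chi,M}(s,a;t)\,r_n(s,a)$ be the expected reward that $\chi$ collects in the \emph{true} MDP $M$ while ignoring the per-slot budget. I would split
\[
\widetilde L^\star-\mathbb{E}_{\pi^\star}[R]\ =\ \underbrace{\big(\widetilde L^\star-V_\chi(M)\big)}_{\text{(A): estimation error}}\ +\ \underbrace{\big(V_\chi(M)-\mathbb{E}_{\pi^\star}[R]\big)}_{\text{(B): budget relaxation}}.
\]
For (A), the reward contribution is bounded on $\cG$ by $\sum_t\sum_n\sum_{s,a\neq 0}\mu_n^\star(s,a;t)\cdot 2\delta_n(s,a)\le 2KT\,\delta_{\max}$, using that $\sum_n\sum_{s,a\neq 0}\mu_n^\star(s,a;t)\le K$ (each active arm costs at least one unit) and $r_n(s,0)=0$; the transition contribution I would control with a simulation/telescoping lemma that pairs the per-step deviations $\|\tilde P_n(\cdot\mid s,a)-P_n(\cdot\mid s,a)\|_1\le 2\delta_n(s,a)$ with the occupancy measures and sums over the horizon, exploiting that each $\mu_n(\cdot;t)$ is a probability distribution so the error is not amplified by a further horizon factor. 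For (B), note that $\pi^\star$ coincides with $\chi$ on every slot where the realized activation cost $\sum_n a_n(t)$ does not exceed $K$, and only forces arms passive on the remaining slots; since at most $K$ units of reward are available per slot, (B) is $\cO(1)$ times the expected total overflow, which a concentration bound on the activation cost of the (conditionally) independent arms around its mean (at most $K$ by the relaxed constraint \eqref{eq:con_rel_lp}) controls by $\cO\big(K(1+\eta)\sqrt T\big)$.

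\emph{Putting it together, and the main obstacle.} Adding the $\cO(SAK\,\Lambda(T))$ reward foregone during the $\Lambda(T)$-sample planning phase, substituting $\delta_{\max}=\cO\big(\sqrt{\log(SAN\Lambda(T)/\eta)/\Lambda(T)}\big)$, and choosing $\Lambda(T)=\cO(\sqrt T)$ to balance the planning cost against the estimation term, gives $\Delta(\pi^\star,\bs_1,T)=\cO\big((SAK+2K(1+\eta))\sqrt T\big)$. I expect the main obstacles to be exactly the two lemmas above: (i) the simulation lemma in (A), i.e.\ showing that the finite-horizon propagation of the transition-estimation error contributes only a term of the desired order rather than blowing up by an extra factor of $T$, which requires pairing the $\ell_1$ errors with the optimal occupancy measures and summing over $t$ rather than bounding term by term; and (ii) turning the asymptotic-optimality statement of Theorem \ref{thm:asym_opt} into the quantitative bound in (B), i.e.\ showing that the OMR Index Policy's budget-relaxation loss is $\cO(K\sqrt T)$ for \emph{finite} $N$ and $K$, which needs a sharp concentration estimate for the per-slot total activation cost of the arms. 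How tightly these two estimates can be made is precisely what pins down the choice $\Lambda(T)=\cO(\sqrt T)$.
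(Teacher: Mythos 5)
Your overall architecture (a planning-phase loss of $\cO(SAK\,\Lambda(T))$, a good/failure event split, and optimism via feasibility of the true MDP in the extended LP) matches the paper's, but the quantitative core of your argument does not deliver the claimed rate, in two places. First, your estimation term (A): you bound the reward contribution by $2KT\delta_{\max}$ with $\delta_{\max}=\cO\big(\sqrt{\log(SAN\Lambda(T)/\eta)/\Lambda(T)}\big)$. With $\Lambda(T)=\cO(\sqrt{T})$ this is $\cO(KT^{3/4})$ (and the simulation-lemma term for the transition errors is of the same order or worse), which already exceeds $\sqrt{T}$; and if you instead balance $SAK\,\Lambda(T)$ against $KT/\sqrt{\Lambda(T)}$ you are pushed to $\Lambda(T)\sim T^{2/3}$ and an overall $\cO(T^{2/3})$ regret — the Restless-UCB rate — not the theorem's $\sqrt{T}$. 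The paper's proof never pays a $T\delta_{\max}$ term: conditioned on the good event it compares $T_2\xi^{opt}$ with the optimistic value through its Lemmas 5--6 and then bounds the residual by $KT_2\sigma$ with $\sigma=2/\Lambda(T)$ (its Lemma 7), which is exactly where the $2K\sqrt{T}$ term comes from; your route replaces that step by $T\delta_{\max}$, and that replacement breaks the rate. Relatedly, on the failure event: with the paper's choice of $\delta_n(s,a)$ the union bound gives $\Pr(\cG^c)\le 2\eta/\Lambda(T)$ — the $\Lambda(T)$ inside the logarithm is there precisely to buy this factor — so the failure-event regret is $2K\eta T/\Lambda(T)=\cO(K\eta\sqrt{T})$ with $\eta$ held fixed; your accounting $\Pr(\cG^c)\le\eta$ leaves an $\eta KT$ term that is linear in $T$ unless $\eta$ is rescaled with $T$, which changes the statement being proved.

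Second, your term (B), the budget-relaxation loss. You flag it honestly as an obstacle, and it is a genuine one: Theorem \ref{thm:asym_opt} only gives optimality of the OMR index policy relative to the relaxed LP asymptotically in $\rho$, and for fixed $N$ and $K$ the per-slot gap between the LP value and the hard-budget index policy does not shrink with $t$; summing over the horizon, the concentration argument you sketch yields a term growing linearly in $T$, not $\cO(K(1+\eta)\sqrt{T})$. The paper does not carry a separate budget-relaxation term at all — in its Lemma 7 it asserts that the executed policy's occupancy-measure value is no less than the optimal one (its ``term1 $\le 0$'' step) — so your decomposition makes explicit an ingredient the paper's proof glosses over, but your proposal does not supply it either. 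As written, then, the proposal would establish at best an $\cO(T^{2/3})$-type bound plus an unresolved linear term, not the stated $\cO\big((SAK+2K(1+\eta))\sqrt{T}\big)$.
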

Since there are two phases in R(MA)$^2$B-UCB,  we decompose the regret as $\Delta(\pi^\star,\bs_1,T)=\Delta(T_1)+\Delta(\pi^\star,\bs_1, T_2)$, where $\Delta(T_1)$ is the regret for the planning phase and $\Delta(\pi^\star, \bs_1, T_2)$ is the regret for the policy execution phase with $T_2=T-T_1$. 
The first term $\mathcal{O}(SAK\sqrt{T})$ in \eqref{eq:regret} is the worst regret from $\Lambda(T)$ explorations of each state-action pair under the generative model with $\mathcal{O}(SA\sqrt{T})$ time steps for sampling and at most $K$ arms being activated each time. The second term $\mathcal{O}(2K(1+\eta)\sqrt{T})$ comes from the policy execution  phase.  Specifically, the $\mathcal{O}(2K\eta\sqrt{T})$ regret occurs when $\Lambda(T)$ explorations for each state-action pair construct a set of plausible 
MDPs that do not contain the true MDP $\mathcal{M}$ in line 4 of Algorithm~\ref{alg:UCB}, which is a rare event with probability ${2\eta}/{\Lambda(T)}$.  
The key then is to characterize the regret when the event that the true MDP $\{(\cS, \cA, P_n, {r}_n), \forall n\}$ lies in the set of plausible MDP $\cM$ occurs. 
{Based on the optimism of plausible MDPs, the optimal average reward $\tilde{\xi}$ for the optimistic MDP $\{(\cS, \cA, \tilde{P}_n, \tilde{r}_n), \forall n\}$ is no less than $\xi^{opt}$.} Therefore the expected regret can be bounded by $T_2\tilde{\xi}-T_2\xi^{opt},$ which is directly related with the occupancy measure we defined.

\begin{remark}
Though R(MA)$^2$B-UCB is an offline non-episodic algorithm,  {it still achieves an $\tilde{\cO}(\sqrt{T})$ regret no worse than the episodic colored-UCRL2.}  Note that for colored-UCRL2, the regret bound is instance-dependent due to the online episodic manner such that the regret bound tends to be logarithmic in the horizon as well.  However, R(MA)$^2$B-UCB adopts explore-then-commit mechanism which uses generative model based sampling and constructs the plausible MDPs sets only once. This removes the instance-dependent regret with order of $\log T$. Though the state-of-the-art Restless-UCB \cite{wang2020restless} has a similar mechanism as ours in obtaining samples in an offline manner, it lowers its implementation complexity by sacrificing the regret performance to $\mathcal{O}(T^{2/3})$ since it heavily depends on the performance of an offline oracle approximator for policy execution. Instead, we leverage our proposed provably optimal and computationally appealing index policy for the policy execution phase. This also contributes to the low multiplicative ``pre-factor'' in the regret. 
\end{remark}

\section{Experiments}\label{sec:exp}
We now present our experimental results that validate our model and theoretical results.  These verify the asymptotic optimality of the \textit{OMR Index Policy}, 
and the sub-linear regret of the R(MA)$^2$B-UCB policy. 
In particular, we evaluate the R(MA)$^2$B-UCB policy under two real-world applications of restless bandit problems, namely ``a deadline scheduling problem'' where each arm has binary actions, and ``dynamic video streaming over fading wireless channel'' where each arm has multiple actions, using real video traces.

\subsection{Evaluation of the \textit{OMR Index Policy}}\label{sec:sub1}
\textbf{\textit{Binary actions:}}  
Since most existing index policies are designed only for the conventional binary action settings in which arms are chosen to be either active or passive, and cannot be applied to the multi-action setting that is considered in our paper, we first consider a controlled Markov process in which there are two actions for each arm, and the states evolve as a specific birth-and-death process where state $s$ can only transit to $s-1$ or $s+1$.  We compare \textit{OMR Index Policy} with the following popular state-of-the-art index policies: Whittle index policy \cite{whittle1988restless}, the { Fluid-priority policy of~\cite{zhang2021restless}}, and a priority based policy proposed in~\cite{verloop2016asymptotically}.  We consider a setup with 10 classes of arms, in which each arm has a state space $\mathcal{S}=\{1,2,3,4,5,6,7,8,9,10\}$.  The arrival rates $\lambda$ are set as $\{3, 6, 9, 12, 15, 18, 21, 24, 27, 30\}$ with a departure rate $\mu=20$.  The controlled transition probabilities satisfy  $P(s,s+1)=\lambda/(\lambda+\mu)$ and  $P(s,s-1)=\mu/(\lambda+\mu)$.  When a class-$i$ arm is activated, it receives a random reward $r_i(s)$ that is a Bernoulli random variable with a state dependent rate $s\cdot p_i$, i.e., $r_i(s)\sim Ber(sp_i)$ where $p_i$ uniformly distributed in $[0.01, 0.1]$. If the arm is not activated then no reward is received.  The time horizon is set to { $T=100$} and the activation ratio is set to $\alpha=K/N=0.3$.  For the ease of exposition, we let the number of arms vary from $50$ to $400.$   

\begin{figure}
	\center
	\begin{minipage}[b]{.45\textwidth}
    	\includegraphics[width=0.99\textwidth]{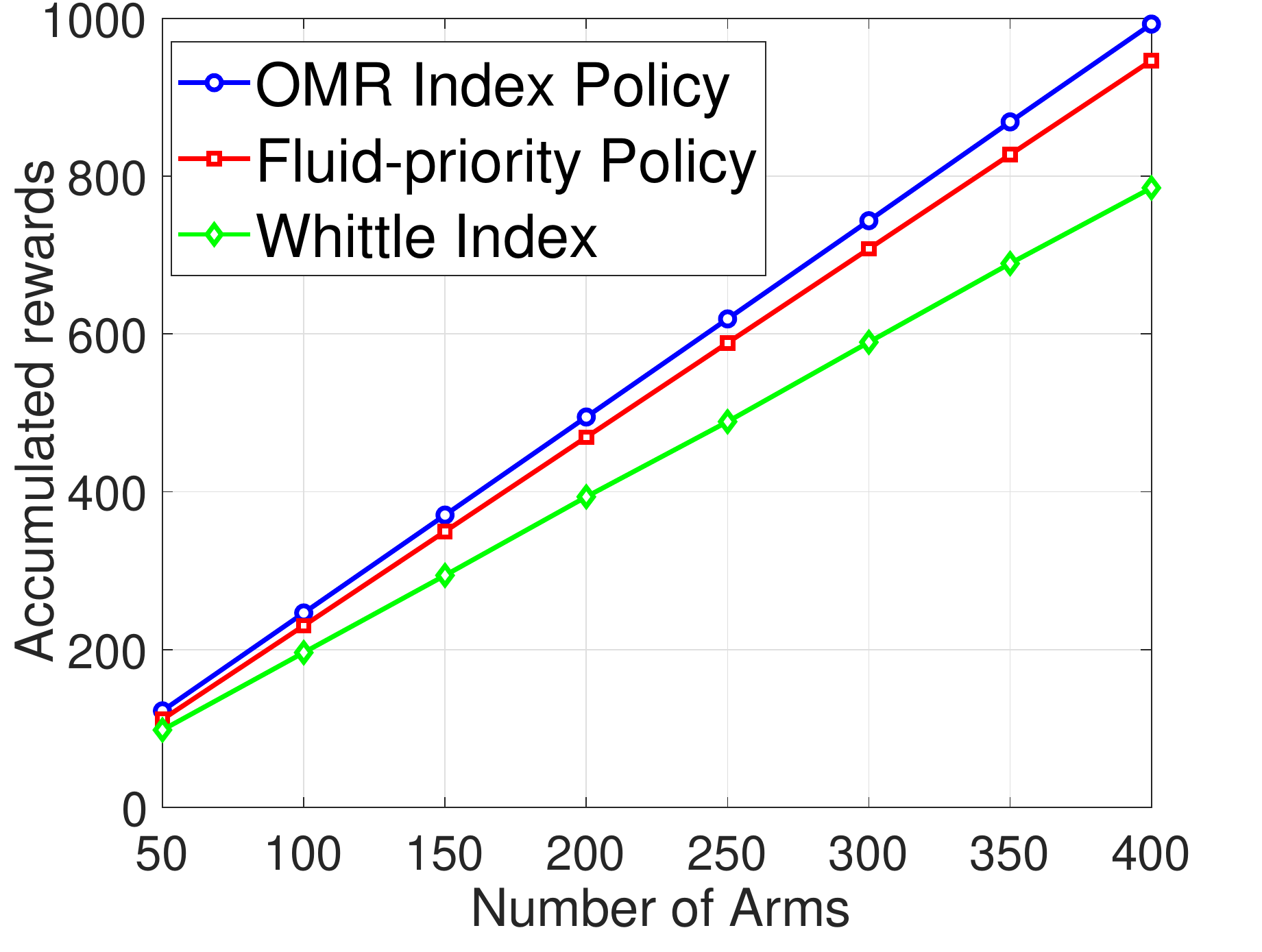}
    	\vspace{-0.25in}
		\caption{{Accumulated reward: binary action setting.}}
	\label{fig:Accumulated_reward}
	\end{minipage}
	\begin{minipage}[b]{.45\textwidth}
	    \includegraphics[width=0.99\textwidth]{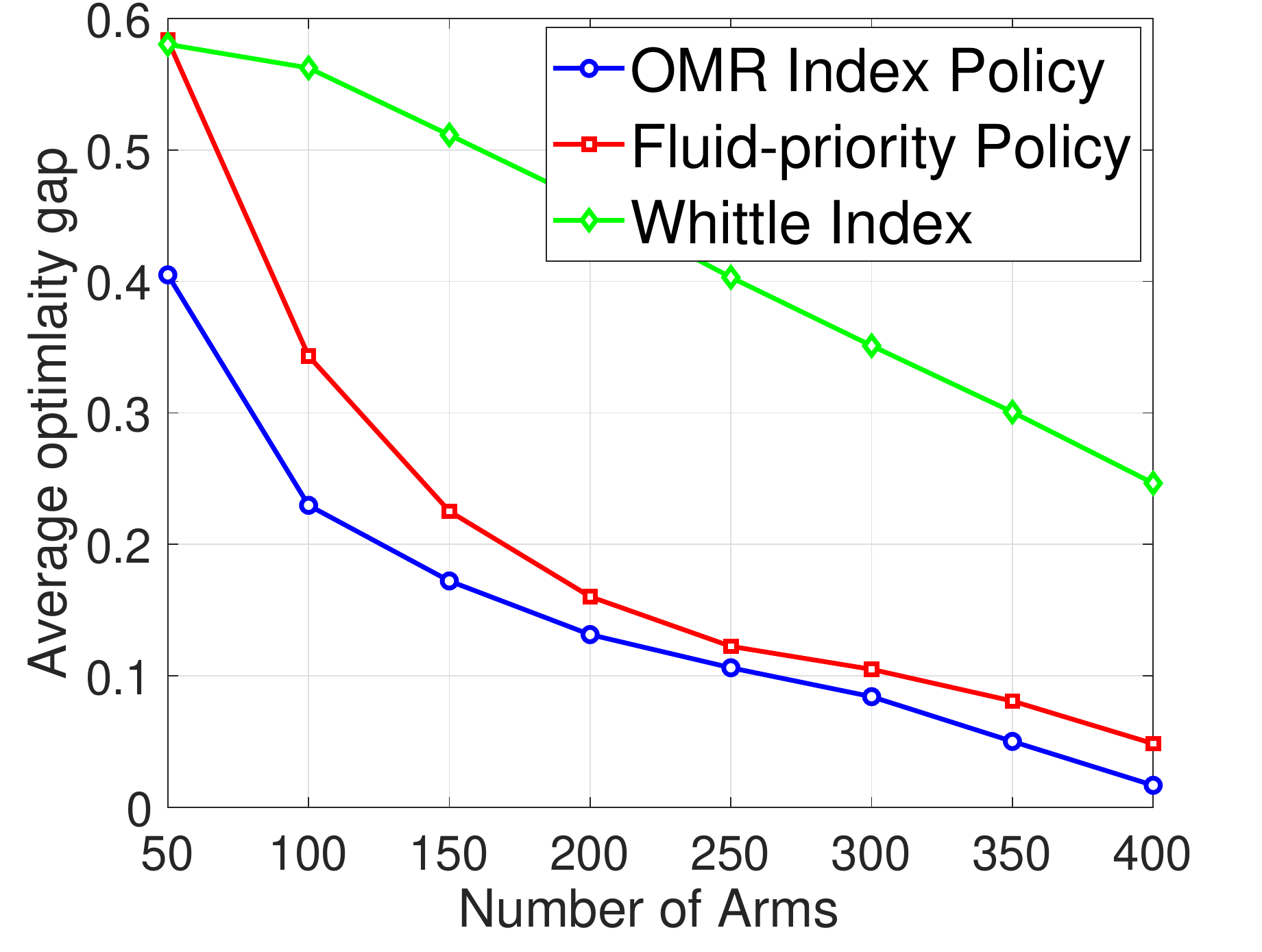}
	    \vspace{-0.25in}
		\caption{{Average optimality gap: binary action setting.}}
		\label{fig:Opt-gap}
	\end{minipage}
	  \vspace{-0.1in}
\end{figure}

The cumulative rewards collected by these policies are presented in Figure~\ref{fig:Accumulated_reward}.  We observe that \textit{OMR Index Policy} performs slightly better than the Fluid-priority policy.  { We conjecture that this is due to the fact that \textit{OMR Index Policy} prioritizes the arms directly based on their contributions to the cumulative reward, while Fluid-priority policy does not differentiate arms in the same priority category.}  More importantly, both \textit{OMR Index Policy} and  Fluid-priority policy
significantly outperform the Whittle index policy.

We further validate the asymptotic optimality of \textit{OMR Index Policy} (see Theorem~\ref{thm:asym_opt}).  In particular, we compare the rewards obtained by \textit{OMR Index Policy} and the two baselines, with that obtained from the theoretical upper bound obtained by solving the LP in \eqref{eq:obj_rel_lp}-\eqref{eq:pmc_rel_lp2}. We call this difference as the optimality gap. 
The average optimality gap, i.e., the ratio between the optimality gap and the number of arms of different policies is illustrated in Figure~\ref{fig:Opt-gap}. Again, we observe that \textit{OMR Index Policy} slightly outperforms the Fluid-priority in terms of the vanishing speed of the average optimality gap since \textit{OMR Index Policy} achieves a higher accumulated reward as shown in Figure~\ref{fig:Accumulated_reward}. Moreover, both \textit{OMR Index Policy} and Fluid-priority significantly outperform the Whittle index policy.  This is due to the fact that the optimality gap of the Fluid-priority index policy (i.e. a constant $\mathcal{O}(1)$) does not scale with the number of arms, while that of Whittle index policy does \cite{zhang2021restless}.

\begin{wrapfigure}{rt}{0.5\linewidth}
    \centering
    \vspace{-0.2in}
       \includegraphics[width=0.5\textwidth]{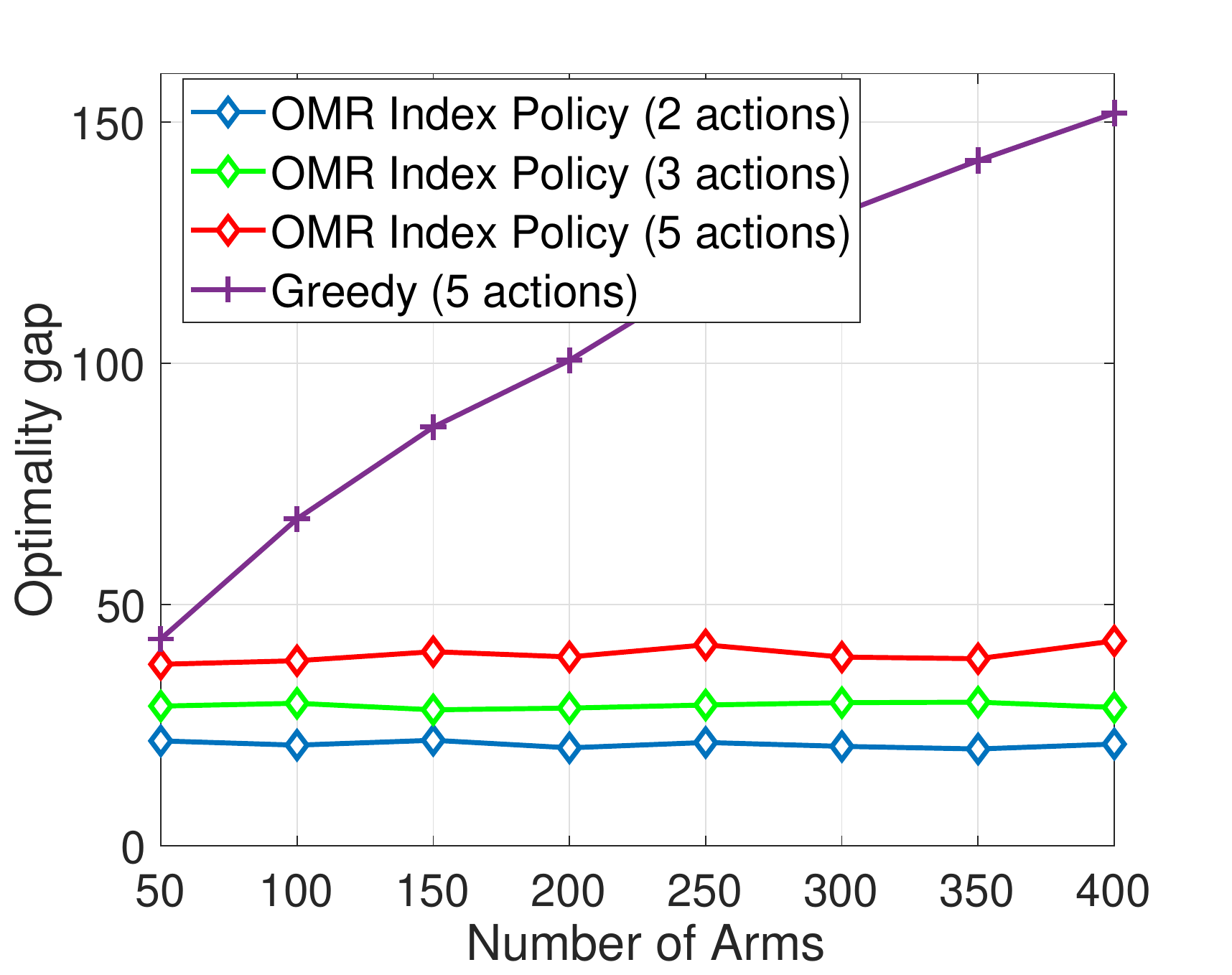}
    	\vspace{-0.1in}
		\caption{{Optimality gap: multi-action setting.}}
	\label{fig:OptimalityGap-multi}
\end{wrapfigure}

\textbf{\textit{Multiple actions:}} We further evaluate our index policy for the general multi-action setup, and consider a more general Markov process in which any two arbitrary states could communicate with each other. The controlled transition probabilities are generated randomly. For the ease of exposition, we consider the number of actions for each arm to assume values from the set $\{2,3,5\}$.  Our results and observations hold for other numbers of arms.  Note that most existing index policies including the two state-of-the-art index policies considered above are designed only for the conventional binary action setup and cannot be applied to the multi-action setting considered in this paper.  To this end, we compare \textit{OMR Index Policy} with the ``greedy policy,'' that at each time selects actions that yield the maximum reward. Note that the choice of action would depend upon the current states, since the rewards depend upon state values. The performance in terms of optimality gap is shown in Figure \ref{fig:OptimalityGap-multi}.~Firstly, we observe that the optimality gap slightly increases as the number of available actions increases while the number of arms is kept fixed. The impact of such marginal increase vanishes as the number of arms increases.  Similar to the observations made in the case of binary actions, this indicates the asymptotic optimality of our proposed \textit{OMR Index Policy}. Secondly, \textit{OMR Index Policy}  significantly outperforms the greedy policy, whose optimality gap increases with the number of arms.

\begin{figure*}
	\center
	\begin{minipage}[b]{.45\textwidth}
    	\includegraphics[width=0.99\textwidth]{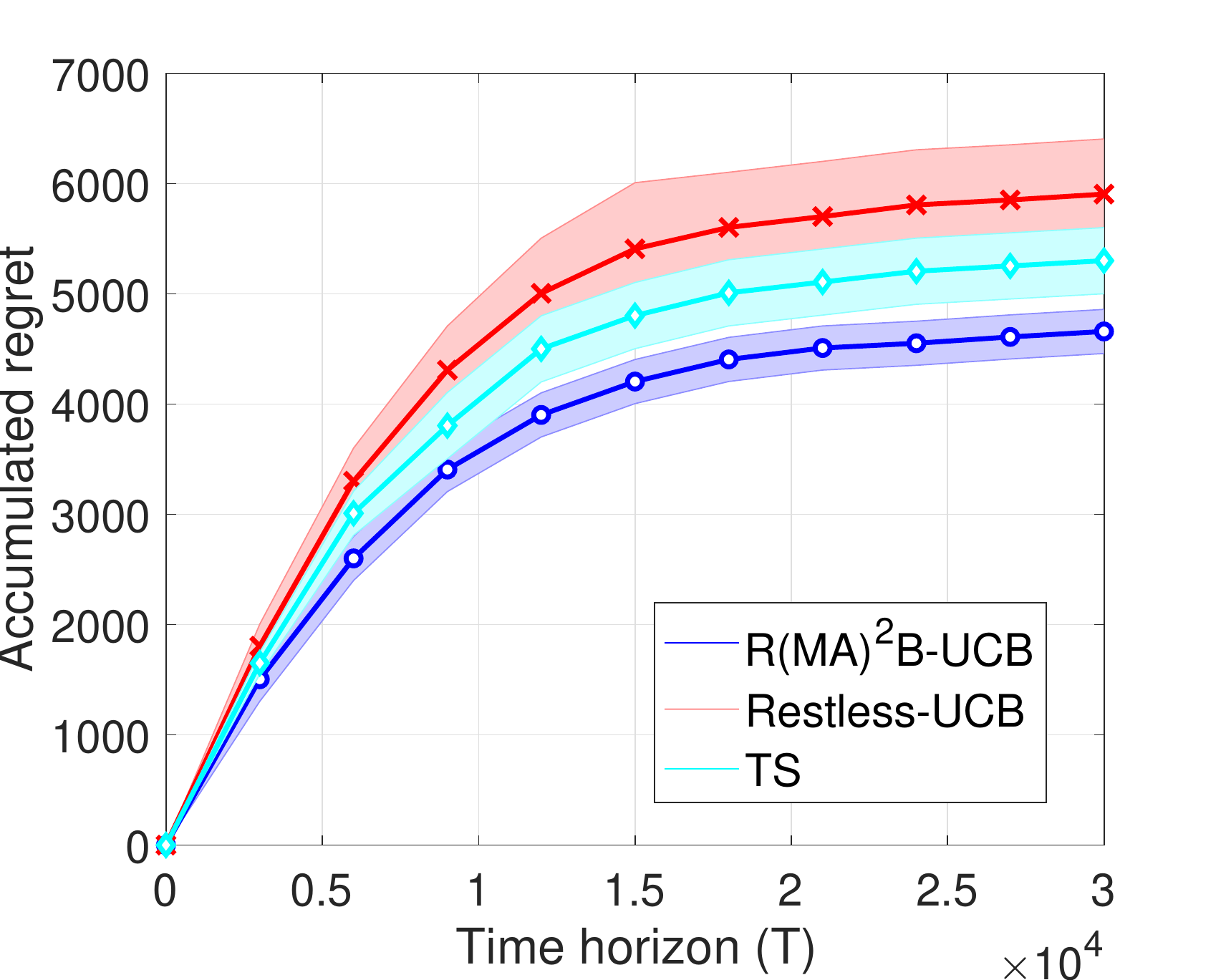}
    	\vspace{-0.25in}
		\caption{Comparison of accumulated regret: binary action setting.}
	\label{fig:regret}
		\end{minipage}
	\begin{minipage}[b]{.45\textwidth}
    	\includegraphics[width=0.99\textwidth]{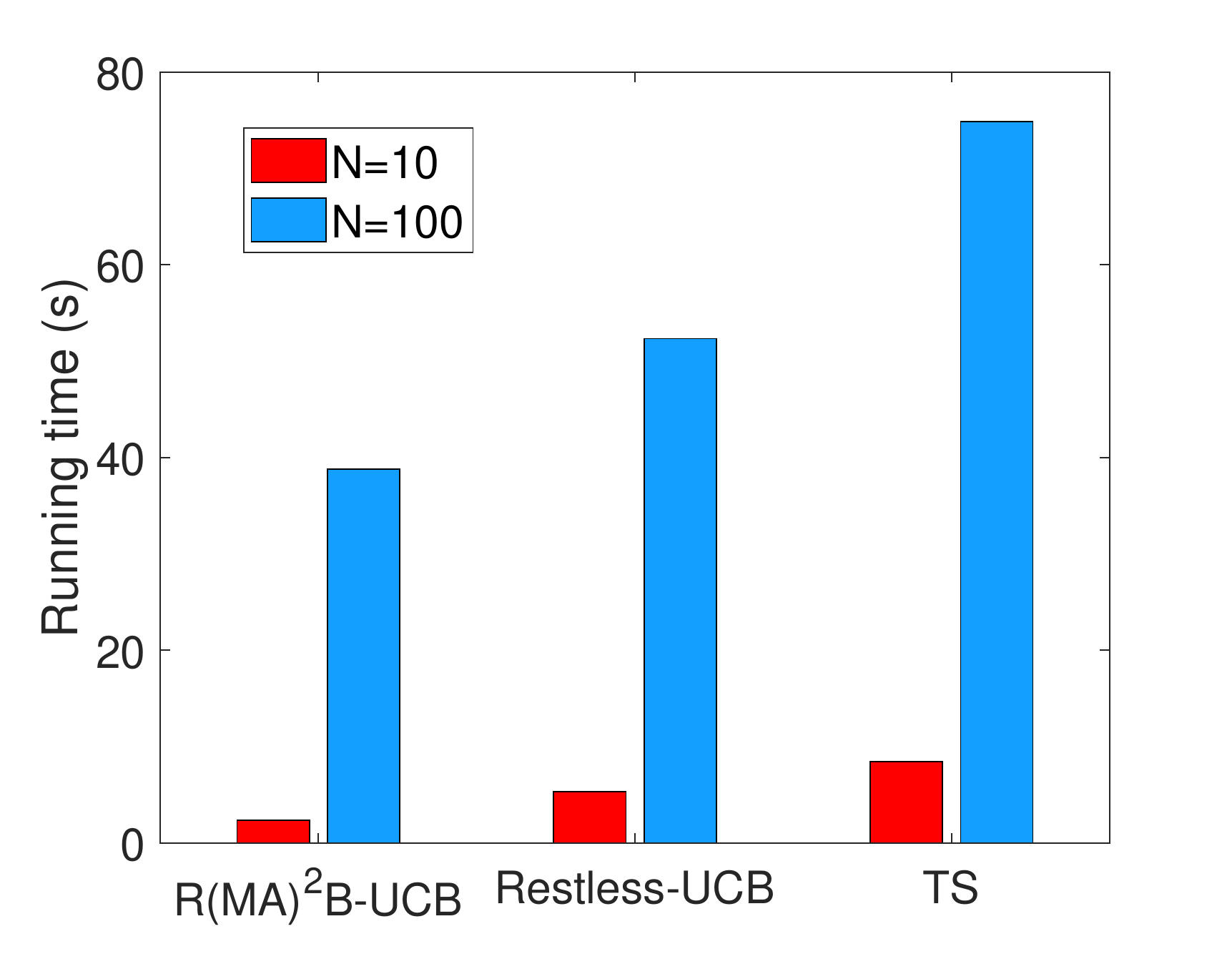}
    	\vspace{-0.25in}
		\caption{Comparison of average running time: binary action setting.}
	\label{fig:running_time}
	\end{minipage}
	\vspace{-0.15in}
\end{figure*}

\subsection{Evaluation of the R(MA)$^2$B-UCB Policy}

\textbf{\textit{Binary actions:}} We then evaluate the performance of  R(MA)$^2$B-UCB.  We compare with two state-of-the-art algorithms including Restless-UCB \cite{wang2020restless} and a Thompson sampling (TS) based policy \cite{jung2019regret} for restless bandits.  Note that Restless-UCB is also an offline learning policy similar to ours while the TS-based policy is an online policy that has a sub-linear regret in the Bayesian setup but suffers from a high computation complexity. Colored-UCRL2~\cite{ortner2012regret} is a popular algorithm for RMAB problems, but it is well known that the computational complexity of colored-UCRL2 grows exponentially with the number of arms. Furthermore, it has been shown in~\cite{wang2020restless} that Restless-UCB outperforms colored-UCRL2, and hence we do not include it in our experiments.

We use the same settings for experiments as was described above for evaluating our index policy  and for simplicity {choose the number of arms $N$ to be $100$,} though the results for a larger number of arms would be similar.  For the TS-based policy, we set the prior
distribution to be uniform over a finite support $\{0, 0.1, 0.2, \ldots, 0.9, 1.0\}.$  
Regrets of these algorithms are shown in Figure~\ref{fig:regret}, in which we use the Monte Carlo simulation with $1,000$ independent trials. 
R(MA)$^2$B-UCB  achieves the lowest cumulative regret. 
An explanation behind this phenomenon is that Restless-UCB sacrifices the regret performance for a lower computational complexity, and hence performs worse as compared with the online TS-based policy. R(MA)$^2$B-UCB achieves the best performance, which can be partly explained by the near-optimality of our index policy (see Remark~\ref{remark-rmabucb}). When the number of samples are sufficiently large, i.e $T$ is large), R(MA)$^2$B-UCB achieves a near optimal performance.

\begin{wrapfigure}{rt}{0.5\linewidth}
    \centering
       \includegraphics[width=0.5\textwidth]{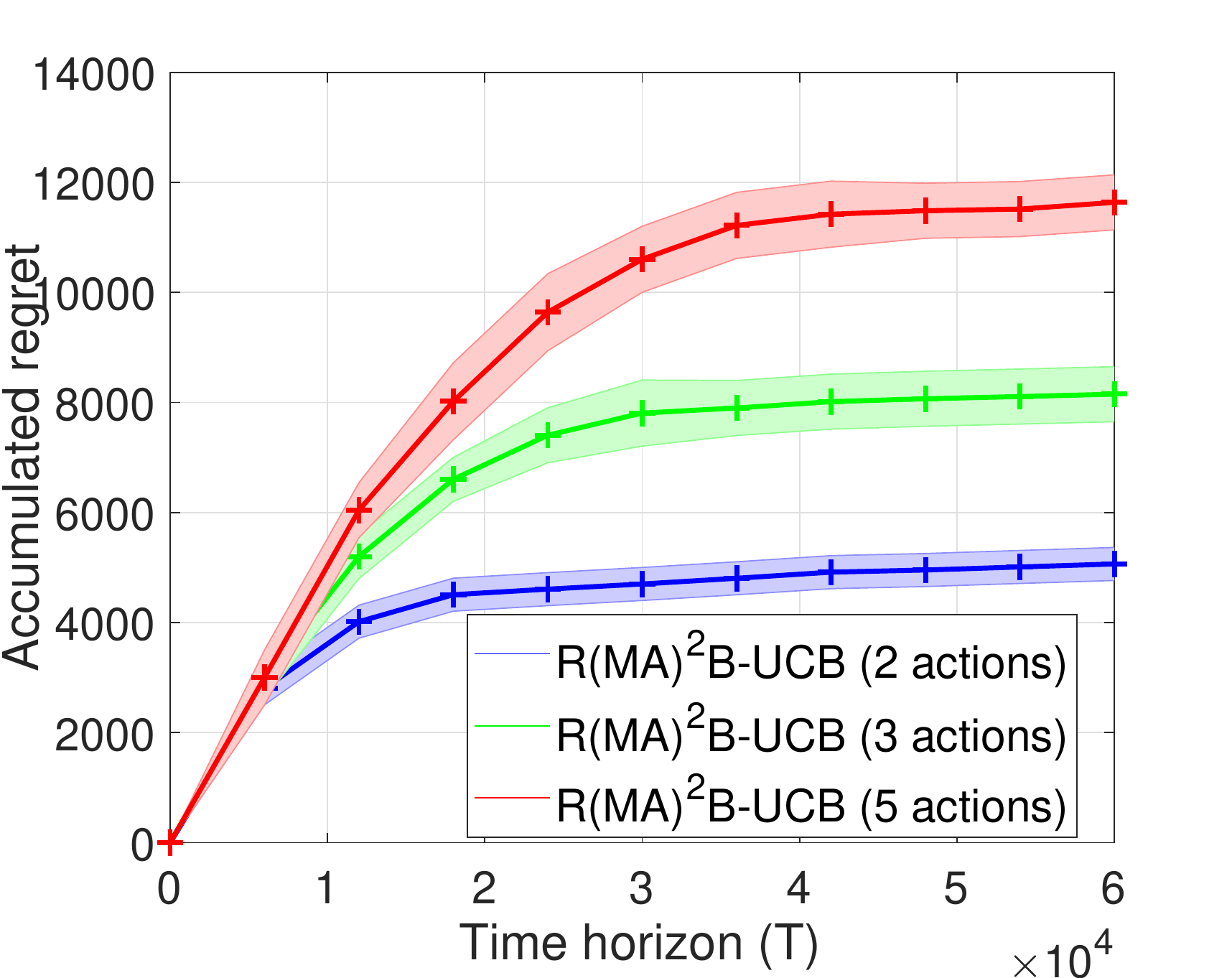}
    	\vspace{-0.2in}
		\caption{Comparison of accumulated regret: multi-action setting.}
	\label{fig:regret_over_time}
\end{wrapfigure}

We also compare the average run time of different algorithms. In this experiment, the horizon is $T=60,000$. The results are presented in Figure  \ref{fig:running_time}, which are averaged over $100$ Monte Carlo runs of  a single-threaded program on Intel Core i5-6400 desktop with 16 GB RAM. It is clear that R(MA)$^2$B-UCB is more efficient in terms of run time.  For example, R(MA)$^2$B-UCB reduces the run time by up to $52\%$ (resp. $70\%$) as compared with the Restless-UCB (resp. TS-based policy) when there are $10$ arms, and reduces the corresponding run time by up to $26\%$ (resp. $48\%$) when there are $100$ arms.  The improvement over the colored-UCRL2 is even more significant when the number of arms is larger, since the time complexity of colored-UCRL2 grows exponentially with the number of arms.  Hence we omit the comparison here. 
A significant improvement comes from the intrinsic design of our policy which only needs to solve an LP once, while {the Restless-UCB needs a computation-intensive numerical approximation of the Oracle (e.g., Whittle index policy)} and the TS-based policy is an online episodic algorithm that solves a Bellman equation for every episode.

\textbf{\textit{Multiple actions:}} {We further evaluate R(MA)$^2$B-UCB under multi-action settings by considering a more general Markov process in which any two arbitrary states may communicate with each other and the transition probability matrices are randomly generated. The other settings remain the same as in the index policy  evaluation. For the ease of exposition, we consider the number of actions to be $2,3$ and $5$.  Figure \ref{fig:regret_over_time} shows the accumulated regret vs. time for R(MA)$^2$B-UCB under different numbers of actions.  Since the Restless-UCB and TS-based policies are hard to be extended to the multi-action setting, we do not consider them in this comparison.  From Figure \ref{fig:regret_over_time}, we observe that R(MA)$^2$B-UCB achieves $\sqrt{T}$ regret under multi-action settings, which validates our theoretical contributions in the paper (see Theorem~\ref{thm:regret}).  Furthermore, when the number of actions increases, it takes a larger number of time steps for the accumulated regret to converge.  
In other words, the planning phase in R(MA)$^2$B-UCB (see Algorithm~\ref{alg:UCB}) will take a longer time to learn the system parameters.  }

\textbf{\textit{Case Study: A Deadline Scheduling Problem.}}  
We consider the  deadline scheduling problem \cite{yu2018deadline} for the scheduling of electrical
vehicle charging stations.  A charging station (agent) has total $N$ charging spots (arms) and can charge $M$ vehicles in each round. The charging station
obtains a reward for each unit of electricity that it provides to a vehicle and receives a penalty (negative reward) when a vehicle is not fully charged. The
goal of the station is to maximize its net reward. We use exactly the same
setting as in \cite{yu2018deadline} for our experiment. 
More specifically, the state of an arm is denoted by a pair of integers $(D; B)$, where $B$ is the amount of
electricity that the vehicle still needs and $D$ is the time until the vehicle leaves the station.  When a
charging spot is available, its state is $(0; 0)$. $B$ and $D$ are upper-bounded by $9$ and
$12$, respectively.  Hence, the size of state space is $109$ for each arm.
The reward received by agent from arm $i$ is as follows,
\begin{align*}
    r_i(t)=\begin{cases}
    (1-0.5)a_i(t), \quad \text{if $B_i(t)>0, D_i(t)>1$},\\
    (1-0.5)a_i(t)-0.2(B_i(t)-a_i(t))^2, \text{if $ B_i(t)>0, D_i(t)=1$},\\
    0, \quad \text{Otherwise},
    \end{cases}
\end{align*}
where $a_i(t)=0$ means being passive and $a_i(t)=1$ means being active.  The state transition satisfies 
\begin{align}\nonumber
    S_i(t+1)=\begin{cases}
    (D_i(t)-1, B_i(t)-a_i(t)), \quad\text{if $D_i(t)>1$},\\
    (D, B), \quad\text{with prob. 0.7 if $D_i(t)\leq 1$},
    \end{cases}
\end{align}
where $(D, B)$ is a random state when a new vehicle arrives at the charging spot $i$. There are total $N=100$ charging spots and a maximum $M=30$ can be served at each time. 

\begin{wrapfigure}{rt}{0.5\linewidth}
    \centering
       \includegraphics[width=0.5\textwidth]{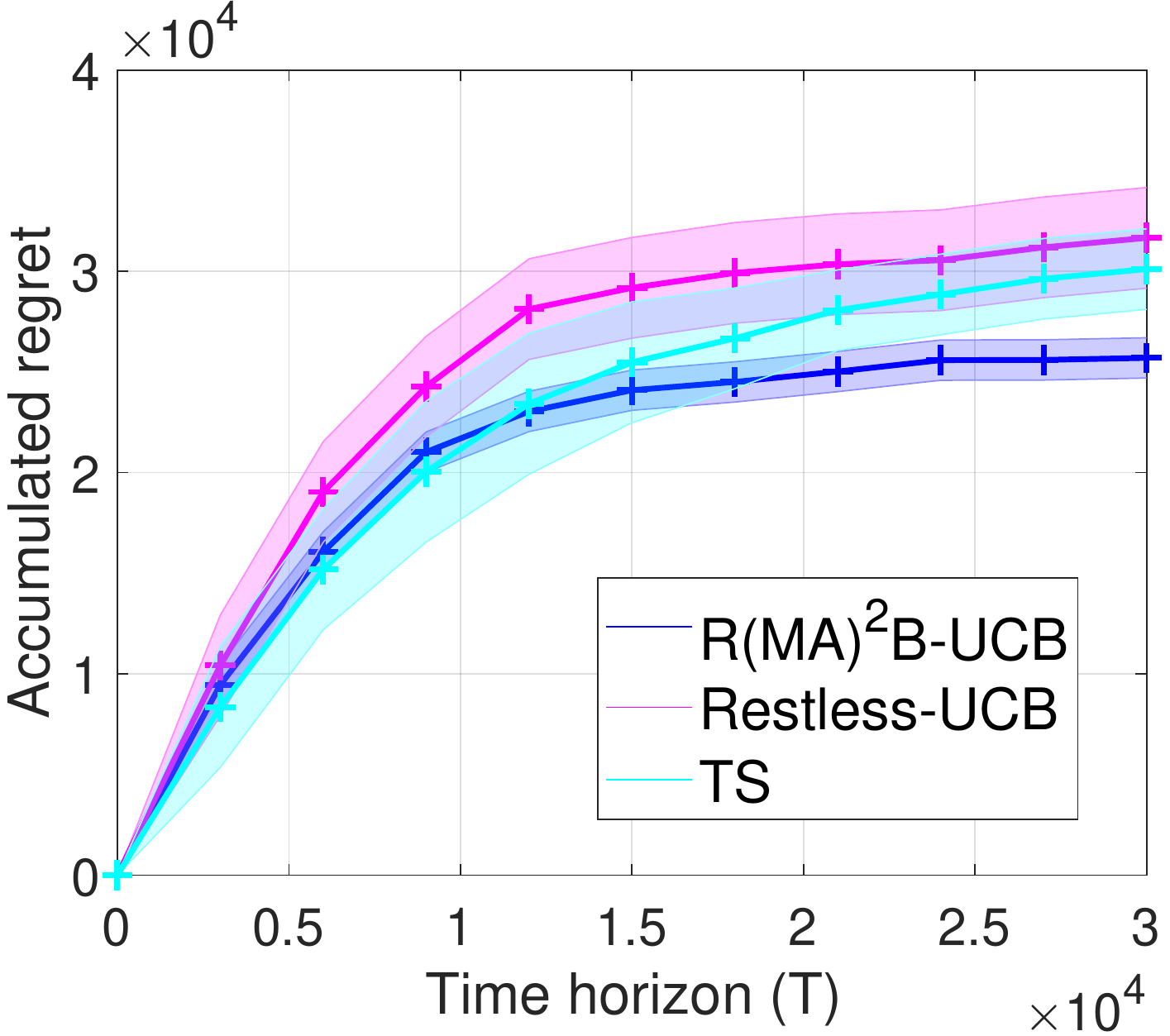}
    	\vspace{-0.2in}
		\caption{Comparison of accumulated regret in the deadline scheduling problem.}
	\label{fig:regret_case_study}
\end{wrapfigure}

We compare the learning performance of our R(MA)$^2$B-UCB with the two state-of-the-art algorithms, i.e., Restless-UCB and a Thompson sampling (TS)-based policy for this deadline scheduling problem, which is shown in Figure \ref{fig:regret_case_study}.  We observe that all three polices achieve sub-linear regrets, which is consistent with their theoretical performance.  Our R(MA)$^2$B-UCB performs better than the other two state-of-the-art algorithms.~Note that the TS-based policy has a lower cumulative regret when the number of time steps is small as compared with the other two policies.~This is because the TS-based policy is an episodic algorithm that improves the policy episode-by-episode while the R(MA)$^2$B-UCB and Restless-UCB run according to a fully random policy at the exploration phase.

\textbf{\textit{Case Study: A Dynamic Video Streaming Over Fading Wireless Channel.}} 
We consider the adaptive video streaming problem, where multiple users compete for network resources in order to deliver video packets over unreliable wireless channels.~This problem can be cast as a restless multi-armed bandit problem that has multiple actions \cite{singh2015optimizing}. In particular, an access point connected to a server dynamically controls (i) the quality of video chunks that are delivered, and (ii) the channel resources, e.g. transmission power that are allocated to $N$ users. These decisions are dynamic, i.e. based on each user's current state, which in this case turns out to be the same as the remaining playtime.  The goal is to maximize the total expected quality of experience  (QoE).

The state of user $n$ at time $t$ is defined as $S_n(t):=(B_n(t), \Gamma_n(t))$ with $B_n(t)$ being the remaining play time of video chunks in its buffer, and $\Gamma_n(t)$  being the quality of last successfully received video chunk before time $t$.  Specifically, $B_n(t)=0$ represents the occurrence of a rebuffering event. The action for user $n$ at time $t$ determined by the access point is denoted as $A_n(t):=(R, W)$, where $R\in\mathcal{R}$ is the quality of the video chunk, and $W\in\mathcal{W}$ is the network resources allocated.  The buffer length remains the same when one chunk is successfully transmitted to the user, otherwise the buffer length decreases by $L$ seconds. Therefore, the transition probability of the MDP associated with user $n$ is expressed as follows,
\begin{align*}
 \mathbb{P}\Big(S_n(t+1)&=((B-L)_+,\Gamma)|S_n(t)=(B,\Gamma), A_n(t)=(R,0)\Big)=1,
\end{align*}
when a passive action is selected.  When action $(R,W)$ is chosen for transmitting a video chunk to user $n$ at $t$, we have
\begin{align*}
&\mathbb{P}\Big(S_n(t+1)=(B, R)|S_n(t)=(B,\Gamma),A_n(t)=(R,W)\Big)=\mathbb{P}(R,W), \displaybreak[1]\\
 &\mathbb{P}\Big(S_n(t+1)=((B-L)_+,\Gamma)|S_n(t)=(B,\Gamma), A_n(t)=(R,W)\Big)= 1-\mathbb{P}(R,W).
\end{align*}
Note that if $B=0$, user $n$ suffers from a rebuffering event, and the length increases by $L$ seconds if one chunk is successfully transmitted to user $n$. The instantaneous reward received by user $n$ at time $t$ is defined by the QoE function as follows,
\begin{align*}
    \text{QoE}_n(t)=R \mathbb{P}(R,W)-{1}_{\{B=0\}}-\left|R-\Gamma\right| \mathbb{P}(R,W).
\end{align*}

We evaluate the performance of {R(MA)$^2$B-UCB Policy} for adaptive video streaming over wireless networks using real video traces \cite{lederer2012dynamic}.  All videos are encoded into multiple chunks, with each chunk having a playtime of one second.  Each video consists of three resolutions: $360$p, $720$p and $1080$p, from which we abstract the bitrate levels as $\mathcal{R}=\{1,2,3\}$. We consider $N=10$ users, and the total network resource is $15$ Mbps with $\mathcal{W}=\{0, 1 \text{Mbps}, 3 \text{Mbps}, 5 \text{Mbps}\}$. Denote each resource level in $\mathcal{W}$ as index $0, 1,2$ and $3$, respectively. Therefore, in total there are $10$ different actions and the successful transmission probabilities under this trace are then calculated as follows,
$\mathbb{P}(0,0)=0$, $\mathbb{P}(1, 1)=1$, $\mathbb{P}(2,1)=0.293$, $\mathbb{P}(3,1)=0.01$, $\mathbb{P}(1,2)=1$, $\mathbb{P}(2,2)=0.57$, $\mathbb{P}(3,2)=0.01$, $\mathbb{P}(1,3)=1$, $\mathbb{P}(2,3)=1$, $\mathbb{P}(3,3)=0.6.$

\begin{figure}
	\center
	\begin{minipage}[b]{.45\textwidth}
    	\includegraphics[width=0.99\textwidth]{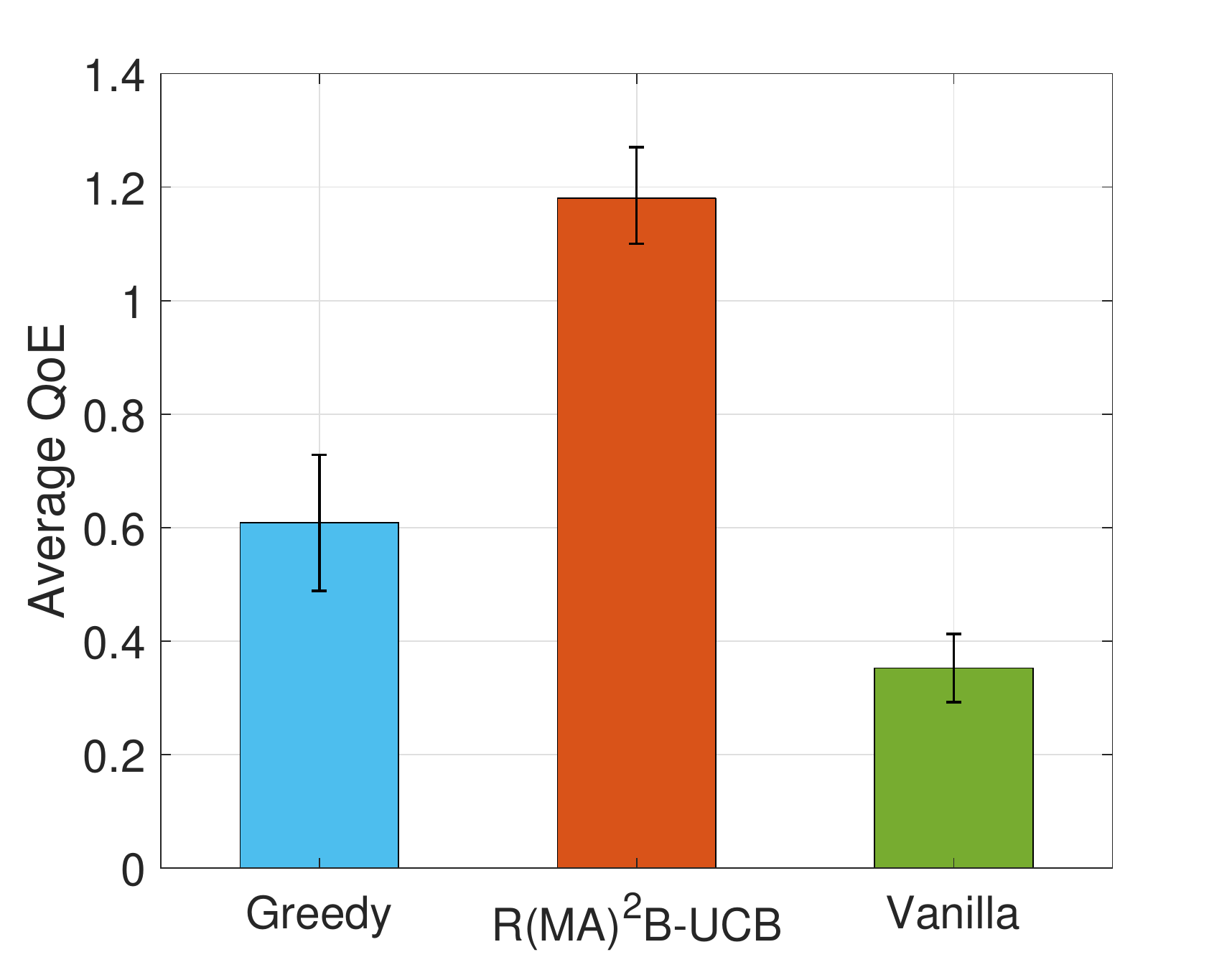}
    	\vspace{-0.25in}
		\caption{Comparison of average QoE in the wireless video streaming problem.}
	\label{fig:QoE_case_study}
	\end{minipage}
	\begin{minipage}[b]{.45\textwidth}
	    \includegraphics[width=0.99\textwidth]{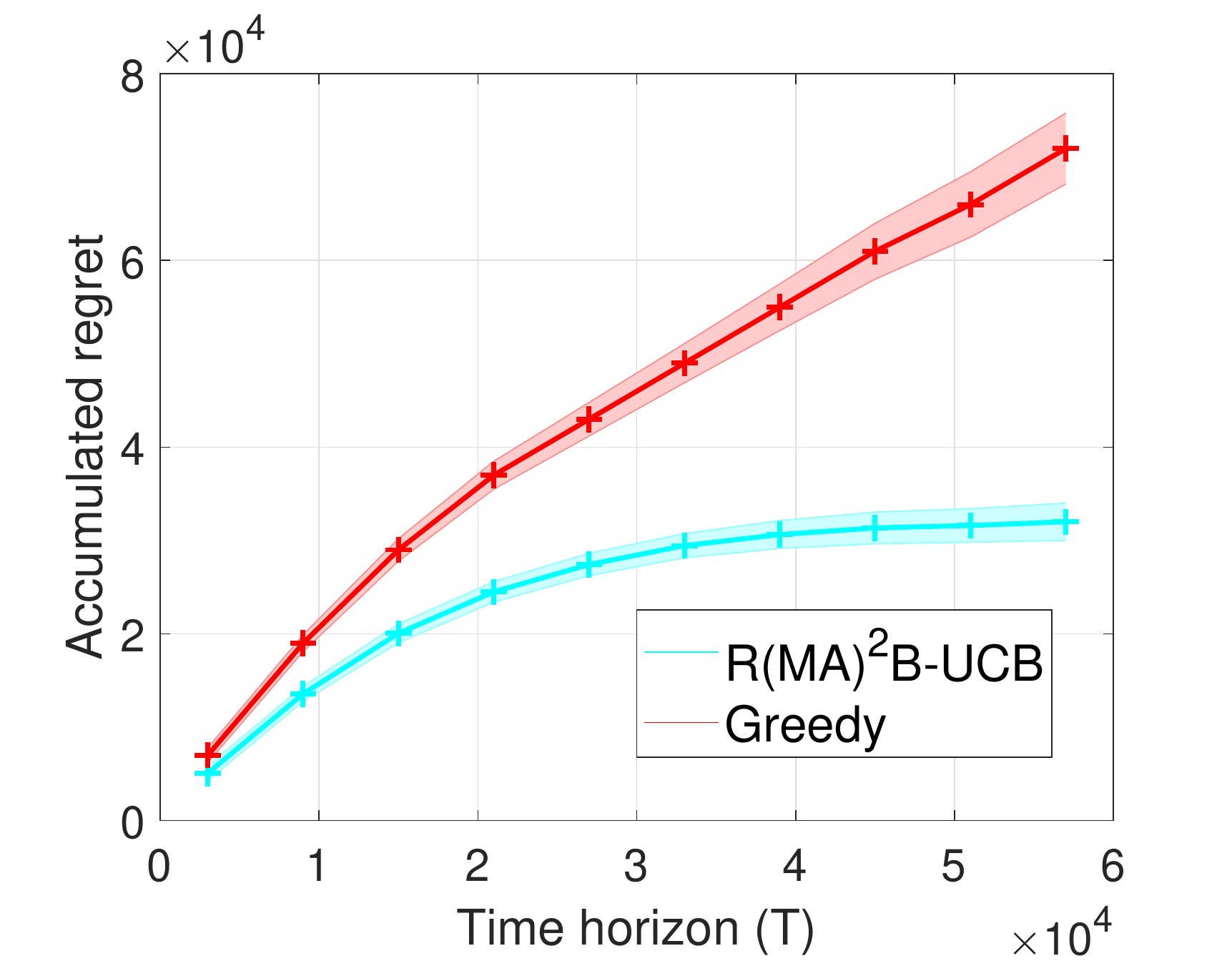}
	    \vspace{-0.25in}
		\caption{Comparison of accumulated regret in the wireless video streaming problem.}
	\label{fig:regret2_case_study}
	\end{minipage}
	  \vspace{-0.1in}
\end{figure}

Since the Restless-UCB and TS-based policies cannot be directly extended to multi-action settings, we compare the learning performance of our R(MA)$^2$B-UCB with the two well-known heuristic algorithms, i.e., Greedy and Vanilla.  In particular, \textit{Vanilla} is a base case with served users being allocated
the highest resources, and no differentiation between users, and \textit{Greedy} is the case where each user greedily selects the action with the largest reward for current state.  The average QoE achieved by these policies are shown in Figure \ref{fig:QoE_case_study}.  We observe that R(MA)$^2$B-UCB significantly outperforms the two heuristic algorithms with the highest average QoE.  We further evaluate the corresponding learning regret as shown in Figure \ref{fig:regret2_case_study}. Since Greedy significantly outperforms Vanilla in average QoE and hence we do not include Vanilla in this comparison.  It is clear that R(MA)$^2$B-UCB achieves a $\sqrt{T}$ regret while Greedy achieves nearly linear regret as $T$ grows large.

\section{Conclusion}\label{sec:conclusion}

In this paper, we studied an important extension of the popular restless multi-armed bandit problem that allows for choosing from multiple actions for each arm, which we denote by R(MA)$^2$B. We firstly proposed a computationally feasible index policy dubbed \textit{OMR Index Policy}, and showed that it is asymptotically optimal.  Since the system parameters are often unavailable in practice, we then developed a learning algorithm that learns the index policy.  We combine a generative approach to reinforcement learning with the UCB strategy to get the R(MA)$^2$B-UCB algorithm. It enjoys a low learning regret since it can fully exploit the structure of the proposed \textit{OMR Index Policy}.  We also show that R(MA)$^2$B-UCB achieves a sub-linear regret.  Our experimental results further showed that R(MA)$^2$B-UCB outperforms other state-of-the-art algorithms.

\bibliographystyle{unsrt}  
\bibliography{refs}

\begin{thebibliography}{10}

\bibitem{whittle1988restless}
Peter Whittle.
\newblock {Restless Bandits: Activity Allocation in A Changing World}.
\newblock {\em Journal of Applied Probability}, pages 287--298, 1988.

\bibitem{puterman1994markov}
Martin~L Puterman.
\newblock {\em {Markov Decision Processes: Discrete Stochastic Dynamic
  Programming}}.
\newblock John Wiley \& Sons, 1994.

\bibitem{gittins1974dynamic}
John Gittins.
\newblock {A Dynamic Allocation Index for the Sequential Design of
  Experiments}.
\newblock {\em Progress in Statistics}, pages 241--266, 1974.

\bibitem{nino2007dynamic}
Jos{\'e} Ni{\~n}o-Mora.
\newblock {Dynamic Priority Allocation via Restless Bandit Marginal
  Productivity Indices}.
\newblock {\em Top}, 15(2):161--198, 2007.

\bibitem{jacko2010restless}
Peter Jacko.
\newblock {Restless Bandits Approach to the Job Scheduling Problem and Its
  Extensions}.
\newblock {\em Modern Trends in Controlled Stochastic Processes: Theory and
  Applications}, pages 248--267, 2010.

\bibitem{bertsimas2000restless}
Dimitris Bertsimas and Jos{\'e} Ni{\~n}o-Mora.
\newblock {Restless Bandits, Linear Programming Relaxations, and A Primal-Dual
  Index Heuristic}.
\newblock {\em Operations Research}, 48(1):80--90, 2000.

\bibitem{dai2011non}
Wenhan Dai, Yi~Gai, Bhaskar Krishnamachari, and Qing Zhao.
\newblock {The Non-Bayesian Restless Multi-Armed Bandit: A Case of
  Near-Logarithmic Regret}.
\newblock In {\em Proc. of IEEE ICASSP}, 2011.

\bibitem{sheng2014data}
Shang-Pin Sheng, Mingyan Liu, and Romesh Saigal.
\newblock {Data-Driven Channel Modeling Using Spectrum Measurement}.
\newblock {\em IEEE Transactions on Mobile Computing}, 14(9):1794--1805, 2014.

\bibitem{mahajan2008multi}
Aditya Mahajan and Demosthenis Teneketzis.
\newblock {Multi-Armed Bandit Problems}.
\newblock In {\em Foundations and Applications of Sensor Management}, pages
  121--151. Springer, 2008.

\bibitem{ahmad2009optimality}
Sahand Haji~Ali Ahmad, Mingyan Liu, Tara Javidi, Qing Zhao, and Bhaskar
  Krishnamachari.
\newblock {Optimality of Myopic Sensing in Multichannel Opportunistic Access}.
\newblock {\em IEEE Transactions on Information Theory}, 55(9):4040--4050,
  2009.

\bibitem{deo2013improving}
Sarang Deo, Seyed Iravani, Tingting Jiang, Karen Smilowitz, and Stephen
  Samuelson.
\newblock {Improving Health Outcomes Through Better Capacity Allocation in A
  Community-based Chronic Care Model}.
\newblock {\em Operations Research}, 61(6):1277--1294, 2013.

\bibitem{lee2019optimal}
Elliot Lee, Mariel~S Lavieri, and Michael Volk.
\newblock {Optimal Screening for Hepatocellular Carcinoma: A Restless Bandit
  Model}.
\newblock {\em Manufacturing \& Service Operations Management}, 21(1):198--212,
  2019.

\bibitem{mate2021risk}
Aditya Mate, Andrew Perrault, and Milind Tambe.
\newblock {Risk-Aware Interventions in Public Health: Planning with Restless
  Multi-Armed Bandits}.
\newblock In {\em Proc.of AAMAS}, 2021.

\bibitem{killian2021beyond}
Jackson~A Killian, Andrew Perrault, and Milind Tambe.
\newblock {Beyond ``To Act or Not to Act": Fast Lagrangian Approaches to
  General Multi-Action Restless Bandits}.
\newblock In {\em Proc.of AAMAS}, 2021.

\bibitem{papadimitriou1994complexity}
Christos~H Papadimitriou and John~N Tsitsiklis.
\newblock {The Complexity of Optimal Queueing Network Control}.
\newblock In {\em Proc. of IEEE Conference on Structure in Complexity Theory},
  1994.

\bibitem{weber1990index}
Richard~R Weber and Gideon Weiss.
\newblock {On An Index Policy for Restless Bandits}.
\newblock {\em Journal of Applied Probability}, pages 637--648, 1990.

\bibitem{nino2001restless}
Jose Nino-Mora.
\newblock {Restless Bandits, Partial Conservation Laws and Indexability}.
\newblock {\em Advances in Applied Probability}, pages 76--98, 2001.

\bibitem{verloop2016asymptotically}
Ina~Maria Verloop.
\newblock {Asymptotically Optimal Priority Policies for Indexable and
  Nonindexable Restless Bandits}.
\newblock {\em The Annals of Applied Probability}, 26(4):1947--1995, 2016.

\bibitem{hu2017asymptotically}
Weici Hu and Peter Frazier.
\newblock {An Asymptotically Optimal Index Policy for Finite-Horizon Restless
  Bandits}.
\newblock {\em arXiv preprint arXiv:1707.00205}, 2017.

\bibitem{zayas2019asymptotically}
Gabriel Zayas-Cab{\'a}n, Stefanus Jasin, and Guihua Wang.
\newblock {An Asymptotically Optimal Heuristic for General Nonstationary
  Finite-Horizon Restless Multi-Armed, Multi-Action Bandits}.
\newblock {\em Advances in Applied Probability}, 51(3):745--772, 2019.

\bibitem{brown2020index}
David~B Brown and James~E Smith.
\newblock {Index Policies and Performance Bounds for Dynamic Selection
  Problems}.
\newblock {\em Management Science}, 66(7):3029--3050, 2020.

\bibitem{liu2011logarithmic}
Haoyang Liu, Keqin Liu, and Qing Zhao.
\newblock {Logarithmic Weak Regret of Non-Bayesian Restless Multi-Armed
  Bandit}.
\newblock In {\em Proc. of IEEE ICASSP}, 2011.

\bibitem{tekin2011adaptive}
Cem Tekin and Mingyan Liu.
\newblock {Adaptive Learning of Uncontrolled Restless Bandits with Logarithmic
  Regret}.
\newblock In {\em Proc. of Allerton}, 2011.

\bibitem{liu2012learning}
Haoyang Liu, Keqin Liu, and Qing Zhao.
\newblock {Learning in A Changing World: Restless Multi-Armed Bandit with
  Unknown Dynamics}.
\newblock {\em IEEE Transactions on Information Theory}, 59(3):1902--1916,
  2012.

\bibitem{tekin2012online}
Cem Tekin and Mingyan Liu.
\newblock {Online Learning of Rested and Restless Bandits}.
\newblock {\em IEEE Transactions on Information Theory}, 58(8):5588--5611,
  2012.

\bibitem{ortner2012regret}
Ronald Ortner, Daniil Ryabko, Peter Auer, and R{\'e}mi Munos.
\newblock {Regret Bounds for Restless Markov Bandits}.
\newblock In {\em Proc. of Algorithmic Learning Theory}, 2012.

\bibitem{jung2019regret}
Young~Hun Jung and Ambuj Tewari.
\newblock {Regret Bounds for Thompson Sampling in Episodic Restless Bandit
  Problems}.
\newblock {\em Proc. of NeurIPS}, 2019.

\bibitem{jung2019thompson}
Young~Hun Jung, Marc Abeille, and Ambuj Tewari.
\newblock {Thompson Sampling in Non-Episodic Restless Bandits}.
\newblock {\em arXiv preprint arXiv:1910.05654}, 2019.

\bibitem{wang2020restless}
Siwei Wang, Longbo Huang, and John Lui.
\newblock {Restless-UCB, an Efficient and Low-complexity Algorithm for Online
  Restless Bandits}.
\newblock In {\em Proc. of NeurIPS}, 2020.

\bibitem{xiong2021learning}
Guojun Xiong, Rahul Singh, and Jian Li.
\newblock {Learning Augmented Index Policy for Optimal Service Placement at the
  Network Edge}.
\newblock {\em arXiv preprint arXiv:2101.03641}, 2021.

\bibitem{auer2002finite}
Peter Auer, Nicolo Cesa-Bianchi, and Paul Fischer.
\newblock {Finite-Time Analysis of the Multiarmed Bandit Problem}.
\newblock {\em Machine Learning}, 47(2):235--256, 2002.

\bibitem{lattimore2020bandit}
Tor Lattimore and Csaba Szepesv{\'a}ri.
\newblock {\em {Bandit Algorithms}}.
\newblock Cambridge University Press, 2020.

\bibitem{kallenberg2003finite}
Lodewijk Kallenberg.
\newblock {Finite State and Action MDPs}.
\newblock In {\em Handbook of Markov Decision Processes}, pages 21--87.
  Springer, 2003.

\bibitem{efroni2020exploration}
Yonathan Efroni, Shie Mannor, and Matteo Pirotta.
\newblock {Exploration-Exploitation in Constrained MDPs}.
\newblock {\em arXiv preprint arXiv:2003.02189}, 2020.

\bibitem{shiryaev2007optimal}
Albert~N Shiryaev.
\newblock {\em {Optimal Stopping Rules}}, volume~8.
\newblock Springer Science \& Business Media, 2007.

\bibitem{hernandez2012further}
On{\'e}simo Hern{\'a}ndez-Lerma and Jean~B Lasserre.
\newblock {\em {Further Topics on Discrete-Time Markov Control Processes}},
  volume~42.
\newblock Springer Science \& Business Media, 2012.

\bibitem{bellmanbook}
Richard Bellman.
\newblock {\em {Dynamic Programming}}.
\newblock Princeton University Press, USA, 2010.

\bibitem{bertsekas1995dynamic}
Dimitri~P Bertsekas.
\newblock {\em {Dynamic Programming and Optimal Control}}, volume~1.
\newblock Athena Scientific Belmont, MA, 1995.

\bibitem{altman1999constrained}
Eitan Altman.
\newblock {\em {Constrained Markov Decision Processes}}, volume~7.
\newblock CRC Press, 1999.

\bibitem{zhang2021restless}
Xiangyu Zhang and Peter~I Frazier.
\newblock {Restless Bandits with Many Arms: Beating the Central Limit Theorem}.
\newblock {\em arXiv preprint arXiv:2107.11911}, 2021.

\bibitem{kearns2002sparse}
Michael Kearns, Yishay Mansour, and Andrew~Y Ng.
\newblock {A Sparse Sampling Algorithm for Near-Optimal Planning in Large
  Markov Decision Processes}.
\newblock {\em Machine Learning}, 49(2):193--208, 2002.

\bibitem{hasanzadezonuzy2020learning}
Aria HasanzadeZonuzy, Dileep Kalathil, and Srinivas Shakkottai.
\newblock {Learning with Safety Constraints: Sample Complexity of Reinforcement
  Learning for Constrained MDPs}.
\newblock In {\em Proc. of AAAI}, 2021.

\bibitem{maurer2009empirical}
Andreas Maurer and Massimiliano Pontil.
\newblock {Empirical Bernstein Bounds and Sample Variance Penalization}.
\newblock {\em arXiv preprint arXiv:0907.3740}, 2009.

\bibitem{jin2019learning}
Chi Jin, Tiancheng Jin, Haipeng Luo, Suvrit Sra, and Tiancheng Yu.
\newblock {Learning Adversarial MDPs with Bandit Feedback and Unknown
  Transition}.
\newblock {\em arXiv preprint arXiv:1912.01192}, 2019.

\bibitem{rosenberg2019online}
Aviv Rosenberg and Yishay Mansour.
\newblock {Online Convex Optimization in Adversarial Markov Decision
  Processes}.
\newblock In {\em Proc. of ICML}, 2019.

\bibitem{kalagarla2020sample}
Krishna~C Kalagarla, Rahul Jain, and Pierluigi Nuzzo.
\newblock {A Sample-Efficient Algorithm for Episodic Finite-Horizon MDP with
  Constraints}.
\newblock In {\em Proc. of AAAI}, 2021.

\bibitem{jaksch2010near}
Thomas Jaksch, Ronald Ortner, and Peter Auer.
\newblock {Near-Optimal Regret Bounds for Reinforcement Learning}.
\newblock {\em Journal of Machine Learning Research}, 11(4), 2010.

\bibitem{mete2021reward}
Akshay Mete, Rahul Singh, Xi~Liu, and PR~Kumar.
\newblock {Reward Biased Maximum Likelihood Estimation for Reinforcement
  Learning}.
\newblock In {\em Proc. of L4DC}, 2021.

\bibitem{yu2018deadline}
Zhe Yu, Yunjian Xu, and Lang Tong.
\newblock {Deadline Scheduling as Restless Bandits}.
\newblock {\em IEEE Transactions on Automatic Control}, 63(8):2343--2358, 2018.

\bibitem{singh2015optimizing}
Rahul Singh and PR~Kumar.
\newblock {Optimizing Quality of Experience of Dynamic Video Streaming over
  Fading Wireless Networks}.
\newblock In {\em Proc. of IEEE CDC}, 2015.

\bibitem{lederer2012dynamic}
Stefan Lederer, Christopher M{\"u}ller, and Christian Timmerer.
\newblock {Dynamic Adaptive Streaming Over HTTP Dataset}.
\newblock In {\em Proc. of ACM MMSys}, 2012.

\bibitem{hoeffding1994probability}
Wassily Hoeffding.
\newblock {Probability Inequalities for Sums of Bounded Random Variables}.
\newblock In {\em The Collected Works of Wassily Hoeffding}, pages 409--426.
  Springer, 1994.

\end{thebibliography}

\clearpage
\appendix

\section{Proof of Theorem~\ref{thm:asym_opt}}

To prove Theorem~\ref{thm:asym_opt}, we first introduce some auxiliary notations. Let $B_n(s; t)$ be the number of class-$n$ arms at state $s$ at time  $t$ and $D_n(s,a;t)$ be the number of class-$n$ arms at state $s$ at time $t$ that are being activated by action $a\in\mathcal{A}\setminus\{0\}$.  In the following, we show that when the number of each class of arms $\rho$ goes to infinity, the ratios $B_n(s; t)/\rho$ and $D_n(s,a; t)/\rho$ converge.

\begin{lemma}\label{lemma1}
For $\forall n\in[N]$ and $\forall t\in[T]$, we have 
\begin{align*}
\lim_{\rho\rightarrow \infty} \frac{B_n(s;t)}{\rho}&=P_n(s;t);\\
\lim_{\rho\rightarrow \infty} \frac{D_n(s,a;t)}{\rho}&=P_n(s;t)\chi_n^\star(s, a;t), ~\forall a\in\mathcal{A}\setminus\{0\}.
\end{align*}
\end{lemma}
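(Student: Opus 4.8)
The plan is to establish both limits by induction on the time index $t$, using a law-of-large-numbers argument at each step to average out the randomness in the state transitions and in the randomized action selection. For the base case $t=1$, the initial states of the $\rho N$ arms are drawn i.i.d.\ according to $\bold{s}_1$, so by the strong law of large numbers $B_n(s;1)/\rho \to \bold{s}_1(s) =: P_n(s;1)$ almost surely, where the identification with an occupancy-measure marginal comes from the initial-condition constraint~\eqref{eq:pmc_rel_lp2}. Given the $B_n(s;1)$ arms in state $s$, the \textit{OMR Index Policy} activates a subset of them and applies action $a$ to each activated arm with probability $\chi_n^\star(s,a;1)$ independently; conditioning on $B_n(s;1)/\rho \to P_n(s;1)$ and applying the law of large numbers to these conditionally independent Bernoulli-type draws yields $D_n(s,a;1)/\rho \to P_n(s;1)\chi_n^\star(s,a;1)$. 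Here one must check that, in the prescribed limit $\rho\to\infty$ with $\alpha = K/N$ fixed, the index policy's budget constraint does not force a macroscopic fraction of arms that ``should'' be activated to instead be made passive --- this follows because the LP solution $\mu^\star$ is feasible for the relaxed constraint~\eqref{eq:con_rel_lp}, so the total activation cost demanded by $\chi^\star$ concentrates around a value $\le \rho K$.

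For the inductive step, assume $B_n(s;t)/\rho \to P_n(s;t)$ and $D_n(s,a;t)/\rho \to P_n(s;t)\chi_n^\star(s,a;t)$ for all $s,a$. The number of class-$n$ arms in state $s'$ at time $t+1$ is a sum over all state-action pairs $(s,a)$ of the number of arms that were in state $s$, received action $a$, and transitioned to $s'$; passive arms contribute via $a=0$. Each such arm transitions to $s'$ independently with probability $P_n(s,a,s')$, so conditioning on the time-$t$ counts and invoking the law of large numbers gives
\al{
\lim_{\rho\to\infty}\frac{B_n(s';t+1)}{\rho} = \sum_{(s,a)} P_n(s;t)\,\chi_n^\star(s,a;t)\,P_n(s,a,s') = \sum_{(s,a)} \mu_n^\star(s,a;t)\,P_n(s,a,s') = P_n(s';t+1),
}
where the last equality is exactly the occupancy-measure flow constraint~\eqref{eq:pmc_rel_lp1} together with the definition~\eqref{def:policy} of $\chi^\star$ (so that $P_n(s;t)\chi_n^\star(s,a;t)=\mu_n^\star(s,a;t)$ whenever state $s$ is reachable, and both sides vanish otherwise). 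The statement for $D_n(s',a;t+1)$ then follows by the same conditional-Bernoulli argument used in the base case, now applied at time $t+1$.

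The main technical obstacle is making the ``conditioning + law of large numbers'' passage rigorous when the conditioning events themselves are random and only converging in the limit: one needs a uniform concentration statement so that convergence of $B_n(\cdot;t)/\rho$ can be fed into the next layer. The clean way is to work on the full probability space and argue almost-sure convergence step by step, using that at each time and each state the relevant random variable is (conditionally on the history) a sum of $\Theta(\rho)$ independent bounded terms, so Hoeffding/Chernoff gives exponential-in-$\rho$ deviation bounds; a union bound over the finitely many $(n,s,a,t)$ and Borel--Cantelli then upgrades this to almost-sure convergence of the entire finite collection of ratios simultaneously. A secondary subtlety is the tie-breaking and budget-exhaustion rule in Algorithm~\ref{alg:Trun_IP} (randomly activating one arm among those with equal OMR index, splitting leftover budget): one should argue that the number of arms affected by these boundary adjustments is $o(\rho)$, hence does not alter the limits; this again reduces to concentration of the total demanded activation cost around its mean.
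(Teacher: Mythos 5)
Your proposal is correct and follows the same skeleton as the paper's proof: induction on $t$, with a law-of-large-numbers step at each time to pass from the random counts $B_n(\cdot;t)$, $D_n(\cdot,\cdot;t)$ to the fluid quantities $P_n(\cdot;t)$ and $P_n(\cdot;t)\chi_n^\star(\cdot,\cdot;t)$, and the flow constraint \eqref{eq:pmc_rel_lp1} together with $P_n(s;t)\chi_n^\star(s,a;t)=\mu_n^\star(s,a;t)$ to close the recursion for $B_n$. Where you diverge is in the two ``boundary'' issues. First, the base case: the paper takes all $\rho$ class-$n$ arms to start in a common deterministic state $s_n(1)$ (so $B_n(s_n(1);1)/\rho=1$ trivially), whereas you draw initial states i.i.d.\ from $\bold{s}_1$ and invoke the SLLN; yours is the more general reading and costs nothing. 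Second, and more substantively, the budget-limited activation: the paper writes $\sum_a aD_n(s,a;t+1)$ exactly as a $\min$ of the class-$(n,s)$ fluid demand and the leftover budget $\rho K-K_{n,s;t+1}$ after all higher-index class-states are served, and then runs a three-case analysis (no activation, full activation, partial activation at the class where the budget binds). You instead argue that, since $\mu^\star$ is feasible for \eqref{eq:con_rel_lp}, the total demanded cost concentrates at or below $\rho K$, so the arms trimmed by budget exhaustion or tie-breaking number only $o(\rho)$ and cannot move the limits. Both resolve the same difficulty; the paper's route makes the priority mechanism and the binding class explicit (and pins down the partial-activation fraction via the tight budget identity), while your route is more uniform across cases and, via the Hoeffding/union-bound/Borel--Cantelli scheme you sketch, makes the almost-sure, simultaneous-over-$(n,s,a,t)$ convergence explicit, something the paper leaves implicit when it replaces binomials by their means inside the induction.
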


\begin{proof}
We prove the above equations by induction.  
When $t=1$, denote the initial state of each arm $n$ as $s_n(1)$, and we have
\begin{align*}
 \lim_{\rho\rightarrow\infty} \frac{B_n(s_n(1);1)}{\rho}&=\lim_{\rho\rightarrow\infty} \frac{\rho}{\rho}=1
 =P_n(s_n(1);1), ~\forall n\in[N].   
\end{align*}

Meanwhile, denote $D_n(s_n(1),a;1)=\chi_n^\star(s_n(1),a;1)\rho$,  and we have
\begin{align*}
\lim_{\rho\rightarrow\infty} \frac{D_n(s_n(1),a;1)}{\rho}&=\chi_n^\star(s_n(1),a;1)
=P_n(s_n(1);1)\chi_n^\star(s_n(1),a;1).
\end{align*}

Now we assume that the equations hold at time $t$. Then we need to show that these conditions also hold for time $t+1.$  

We first show that this is true for the first equation in Lemma~\ref{lemma1}. Denote $C_n(s^\prime,a,s;t)$ as the number of class-$n$ arms which are activated under the policy $\pi_n^\star$ and transit from state $s^\prime$ at time $t$ to state $s$ at time $t+1$, and $G_n(s^\prime,0,s;t)$ as the number of class-$n$ arms which are kept passive under the policy $\pi_n^\star$ and transit from state $s^\prime$ at time $t$ to state $s$ at time $t+1$. Hence we have 
\begin{align*}
    B_n(s;t+1)=\sum_{s^\prime,a\in\mathcal{A}\setminus\{0\}}C_n(s^\prime,a,s;t)+G_n(s^\prime,0,s;t).
\end{align*}
Dividing both sides by $\rho$ yields
\begin{align*}\label{eq3:lemma3}
    \lim_{\rho\rightarrow\infty} \frac{B_n(s;t+1)}{\rho}=&\lim_{\rho\rightarrow\infty}\sum_{s^\prime,a\in\mathcal{A}\setminus\{0\}} \frac{C_n(s^\prime,a, s;t)}{\rho}
    +\lim_{\rho\rightarrow\infty}\sum_{s^\prime} \frac{G_n(s^\prime,0, s;t)}{\rho}.
\end{align*}
Note that $C_n(s^\prime, a,s;t)$ is a binomial random variable with $D_n(s^\prime,a; t)$ trails and success probability $P_n(s^\prime,a,s)$. Similarly, $G_n(s^\prime, 0,s;t)$ is a binomial random variable with $B_n(s^\prime;t)-\sum_{a\in\mathcal{A}\setminus\{0\}}D_n(s^\prime,a;t)$ trials and success probability $P_n(s^\prime,0,s)$. Then, we can rewrite the above equation as follows 
\begin{align*}
   &\lim_{\rho\rightarrow\infty} \frac{B_n(s;t+1)}{\rho}\\
=&\sum_{s^\prime,a\in\mathcal{A}\setminus\{0\}}\lim_{\rho\rightarrow\infty} \frac{D_n(s^\prime,a;t)}{\rho}P_n(s^\prime,a,s)+\sum_{s^\prime}\lim_{\rho\rightarrow\infty} \frac{B_n(s^\prime;t)-\sum_{a\in\mathcal{A}\setminus\{0\}}D_n(s^\prime,a;t)}{\rho}P(s^\prime,0,s)\displaybreak[1]\\
=&\sum_{s^\prime,a\in\mathcal{A}\setminus\{0\}}P_n(s^\prime;t)\chi_n^\star(s^\prime,a;t)P_n(s^\prime,a,s)+\sum_{s^\prime}P_n(s^\prime;t)\left(1-\sum_{a\in\mathcal{A}\setminus\{0\}}\chi_n^\star(s^\prime,a;t)\right)P_n(s^\prime,0,s)\displaybreak[3]\\
=&P_n(s,t+1). ~~a.s. 
\end{align*}

Next we show that the second equation in Lemma~\ref{lemma1} holds for time $t+1.$
To ease the notation, we first define the set $\cI_n(s;t)$ that contains all arms at states that have a higher index than the index of class-$n$ arms at state $s$ at time $t$, i.e.,
\begin{align*}
    \cI_n(s;t):=\Big\{(i,j)\big\vert\psi_i^\star(j;t)>\psi_n^\star(s;t),
    \forall i\in[N], j\in\cS\Big\}.
\end{align*}
To ease the expression, we define the resources consumed before activating arm $n$ at state $s$ at time $t$ as
\begin{align*}
    K_{n,s;t+1}:=\sum_{(i,j)\in\cI_n(s;t)}\!\!\!B_i(j,t+1)\!\!\!\sum_{a\in\mathcal{A}\setminus\{0\}}a\chi_i^\star(j,a;t+1).
\end{align*}
Then, based on our \textit{OMR Index Policy}, we have
\begin{align*}
&\sum_{a\in\mathcal{A}\setminus\{0\}}aD_n(s,a;t+1)\\
=&\min\Bigg(\Big(\rho K-K_{n,s;t+1}\Big)^+, B_n(s;t+1)\sum_{a\in\mathcal{A}\setminus\{0\}}a\chi_n^\star(s,a;t+1)\Bigg)\\  
=&\pmb{1}\Bigg(\rho K-K_{n,s;t+1}\geq B_n(s,t+1)\sum_{a\in\mathcal{A}\setminus\{0\}}a\chi_n^\star(s,a;t+1)\Bigg)\cdot B_n(s,t+1)\sum_{a\in\mathcal{A}\setminus\{0\}}a\chi_n^\star(s,a;t+1)\\
+&\pmb{1}\Bigg(0\leq\rho K-K_{n,s;t+1}< B_n(s,t+1)\sum_{a\in\mathcal{A}\setminus\{0\}}a\chi_n^\star(s,a;t+1)\Bigg)\cdot\Big(\rho K-K_{n,s;t+1}\Big).
\end{align*}
Dividing both sides by $\rho$ and taking the limit, we have
\begin{align*}
    &\lim_{\rho\rightarrow\infty}\sum_{a\in\mathcal{A}\setminus\{0\}}\frac{aD_n(s,a;t+1)}{\rho}\displaybreak[0]\\
=&\pmb{1}\Bigg(K-K_{n,s;t+1}/\rho\geq P_n(s;t+1)\sum_{a\in\mathcal{A}\setminus\{0\}}a\chi_n^\star(s,a;t+1)\Bigg)\cdot P_n(s;t+1)\sum_{a\in\mathcal{A}\setminus\{0\}}a\chi_n^\star(s,a;t)\displaybreak[2]\\
+&\pmb{1}\Bigg(0\leq K-K_{n,s;t+1}/\rho< P_n(s;t+1)\sum_{a\in\mathcal{A}\setminus\{0\}}a\chi_n^\star(s,a;t+1)\Bigg)\cdot\Big( K-K_{n,s;t+1}/\rho\Big).
\end{align*}

In the following, we prove the desired results by considering three cases. First, we assume that all arms of class-$n$ at state $s$ at time $t+1$ cannot be activated, i.e., $\chi_n^\star(s,a;t+1)=0, \forall a\in \mathcal{A}\setminus\{0\}$, which implies that $K-K_{n,s;t+1}/\rho\leq 0$. Hence we have
\begin{align*}
   & \lim_{\rho\rightarrow\infty}\frac{D_n(s,a;t+1)}{\rho}=0
    =P_n(s;t+1)\chi_n^\star(s,a;t+1).
\end{align*}
Next, we assume that all arms of class-$n$ at state $s$ at time $t+1$ can be activated, which means $K-K_{n,s;t+1}/\rho\geq P_n(s,t+1)\sum_{a\in\mathcal{A}\setminus\{0\}}a\chi_n^\star(s,a;t).$
In this case,  Since the actions are randomly selected according to $\chi_n^\star(s,a;t+1)$, we have 
\begin{align*}
 \lim_{\rho\rightarrow\infty}\frac{D_n(s,a;t+1)}{\rho}=P_n(s;t+1)\chi_n^\star(s,a;t+1).   
\end{align*}
Last, we assume that only partial arms of class-$n$ at state $s$ at time $t+1$ can be activated, which implies $0<\sum_{a\in\mathcal{A}\setminus\{0\}}\chi_n^\star(s,a;t+1)<1$.   Due to the activation budget constraint, we have 
\begin{align*}
    0<K-K_{n,s;t+1}/\rho= P_n(s,t+1)\!\!\!\sum_{a\in\mathcal{A}\setminus\{0\}}\!\!\!a\chi_n^\star(s,a;t+1).
\end{align*}
Based on our \textit{OMR Index Policy}, we have the following 
\begin{align*}
    &\lim_{\rho\rightarrow\infty}\sum_{a\in\mathcal{A}\setminus\{0\}}\frac{aD_n(s,a;t+1)}{\rho}\\
    =&K-\sum_{(i,j)\in\cI_n(s;t)}\!\!\!P_i(j;t+1)\!\!\!\sum_{a\in\mathcal{A}\setminus\{0\}}a\chi_i^\star(j,a;t+1)\\
    =& P_n(s;t+1)\sum_{a\in\mathcal{A}\setminus\{0\}}ad_n(s,a;t+1).
\end{align*}
According to the law of large number, we obtain
\begin{align*}
    \lim_{\rho\rightarrow\infty} d_n(s,a;t+1)=\chi_n^\star(s,a;t+1),  a.s.
\end{align*}
Therefore, we have 
\begin{align*}
    &\lim_{\rho\rightarrow\infty}\frac{D_n(s,a;t+1)}{\rho}
    = P_n(s;t+1)\chi_n^\star(s,a;t+1), 
\end{align*}
which completes the proof.
\end{proof}

Now we are ready to present the proof of Theorem~\ref{thm:asym_opt}. 
On the one hand, it is clear that the total reward our \textit{OMR Index Policy} $\pi^\star, $ achieves cannot exceed that achieved by the optimal policy, i.e., 
\begin{align*}
    \lim_{\rho\rightarrow\infty}\frac{R(\pi^\star,\rho K, \rho N )}{\rho }\leq \lim_{\rho\rightarrow\infty}\frac{R(\pi^{opt},\rho K, \rho N)}{\rho }.
\end{align*}
On the other hand, we show that the total reward obtained by our \textit{OMR Index Policy} is not less than that achieved by the optimal policy based on Lemma~\ref{lemma1}, i.e., 
\begin{align*}
    &\lim_{\rho\rightarrow\infty}\frac{R(\pi^\star,\rho K, \rho N)}{\rho}\\
=&\lim_{\rho\rightarrow\infty}\frac{1}{\rho }\mathbb{E}_{\pi^\star}\Bigg[\sum_{n=1}^{N}\sum_{t=1}^{T}\sum_{s,a\in\mathcal{A}\setminus\{0\}}\bar{r}_n(s,a)D_n(s,a;t)\\
&\qquad\qquad\qquad\qquad\qquad\qquad+\sum_{n=1}^{N}\sum_{t=1}^{T}\sum_{s}\bar{r}_n(s,0)(B_n(s,t)-\sum_{a\in\mathcal{A}\setminus\{0\}}D_n(s,a;t))\Bigg]\displaybreak[0]\\
=&\sum_{n=1}^{N}\sum_{t=1}^{T}\sum_{s, a\in\mathcal{A}\setminus\{0\}}\bar{r}_n(s,a)\lim_{\rho\rightarrow\infty}\frac{\mathbb{E}_{\pi^\star}[D_n(s,a;t)]}{\rho}\displaybreak[1]\\
&\qquad\qquad\qquad\qquad\qquad\qquad+\sum_{n=1}^{N}\sum_{t=1}^{T}\sum_{s}\bar{r}_n(s,0)\lim_{\rho\rightarrow\infty}\frac{\mathbb{E}_{\pi^\star}\left[B_n(s,t)-\sum\limits_{a\in\mathcal{A}\setminus\{0\}}D_n(s,t)\right]}{\rho}\displaybreak[2]\\
\overset{(a)}{=}& \sum_{n=1}^{N}\sum_{t=1}^{T}\sum_{s, a\in\mathcal{A}\setminus\{0\}}\bar{r}_n(s,a)P_n(s;t)\chi_n^\star(s,a;t)+ \sum_{n=1}^{N}\sum_{t=1}^{T}\sum_{s}\bar{r}_n(s,0)P_n(s,t)\chi_n^\star(s,0;t)\\ 
=&\sum_{n=1}^{N}\sum_{t=1}^{T}\sum_{s, a\in\mathcal{A}\setminus\{0\}}\bar{r}_n(s,a)\mu_n^\star(s,a;t)+ \sum_{n=1}^{N}\sum_{t=1}^{T}\sum_{s}\bar{r}_n(s,0)\mu_n^\star(s,0;t)\\ 
\overset{(b)}{\geq}& \lim_{\rho\rightarrow\infty}\frac{R(\pi^{opt},\rho K, \rho N)}{\rho },
\end{align*}
where (a) follows from Lemma \ref{lemma1} and (b) comes from the fact that the relaxed problem achieves an upper bound of the original optimal solution. This completes the proof.

\section{Proof of Theorem~\ref{thm:regret}}
In this section, we present the proof detail of Theorem~\ref{thm:regret}, i.e., the regret of R(MA)$^2$B-UCB.  Since there are two phases in R(MA)$^2$B-UCB,  we decompose the regret $\Delta(\pi^\star,\bs_1,T)$ into two distinct parts, i.e., the regret of the planning phase with any random policy and the regret of the policy execution phase.  To this end, we divide the total time horizon $T$ into the planning part $T_1$ and the policy execution part $T_2$, respectively, i.e., $T=T_1+T_2$. Then the total regret can be expressed as  
\beq\nonumber
\Delta(\pi^\star,\bs_1; T)=\Delta(T_1)+\Delta(\pi^\star, \bs_1, T_2).
\eeq 
In the following, we derive the regrets for these two parts, respectively.

\subsection{The Regret of the Planning Phase}
{ In the planning phase, each state-action pair $(s,a)$ for each arm is uniformly sampled for $\Lambda(T)$ times according to a generative model}. The performance gap between the optimal policy and the random policy at each time is bounded since the reward is bounded and { the maximum number of arms can be activated at each time is not larger than $K$}.  To this end, we can easily bound the regret of the planning phase $\Delta(T_1)$ as presented in the following lemma.

\begin{lemma}\label{lemma2}
Since the reward is bounded and not greater than one, the regret in the planning phase can be bounded as
$$
\Delta(T_1)=\cO\left({SAK}\cdot \Lambda(T)\right).
$$
\end{lemma}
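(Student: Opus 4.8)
\textbf{Proof plan for Lemma~\ref{lemma2}.}
The plan is to bound the regret accumulated during the planning phase by a crude, worst-case argument that only uses the boundedness of per-step rewards and the activation budget constraint. First I would recall that the planning phase consists of sampling every state-action pair $(s,a)$ for each of the $N$ arms exactly $\Lambda(T)$ times. Since there are $S$ states, $A+1$ actions (or $A$ non-passive actions, up to the choice of how sampling is organized), and $N$ arms, the total number of time steps consumed by the planning phase is $T_1 = \mathcal{O}(SAN\Lambda(T))$ when sampling is done arm-by-arm sequentially; if one samples across arms in parallel within a step, the count is instead $\mathcal{O}(SA\Lambda(T))$ time steps, and I would adopt whichever convention matches Algorithm~\ref{alg:UCB} so that $T_1 = \mathcal{O}(SA\Lambda(T))$ accounts for the $N$ arms being handled simultaneously.

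Next I would bound the per-step regret. At each time step $t$ in the planning phase, the optimal policy collects at most $\xi^{opt}$ reward in expectation, but more simply: since each $r_n(t)\in[0,1]$ and at most $K$ arms can be activated at any time (because $\sum_n a_n(t)\le K$ with $a\ge 1$ for active arms, so at most $K$ arms are active), the instantaneous reward $R(t)=\sum_n r_n(t)$ is at most $K$ under \emph{any} policy. Hence the reward foregone at each planning step, relative to the optimum, is at most $K$. Summing over the $T_1 = \mathcal{O}(SA\Lambda(T))$ steps of the planning phase gives
\[
\Delta(T_1) \;\le\; K \cdot T_1 \;=\; \mathcal{O}\bigl(SAK\cdot\Lambda(T)\bigr),
\]
which is exactly the claimed bound. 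I would also note that the random (uniform) policy used during planning collects nonnegative reward, so this bound is not tight, but it suffices.

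There is no real obstacle here — the lemma is deliberately a loose bound. The only point requiring a little care is pinning down the precise form of $T_1$: whether the $N$ arms contribute a multiplicative factor of $N$ or are absorbed because sampling happens in parallel across arms within each time step. Given that the final statement of the lemma has no $N$ in it, the intended reading is that each time step samples one $(s,a)$ pair for \emph{all} arms simultaneously (consistent with the generative-model setup, where a single step simulator yields next states for every arm), so $T_1 = \mathcal{O}(SA\Lambda(T))$ and the per-step loss bound of $K$ yields $\Delta(T_1)=\mathcal{O}(SAK\Lambda(T))$. With $\Lambda(T)=\mathcal{O}(T^{1/2})$ this contributes the $\mathcal{O}(SAK\sqrt{T})$ term in Theorem~\ref{thm:regret}.
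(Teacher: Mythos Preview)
Your proposal is correct and matches the paper's own proof essentially line for line: the paper argues exactly that (i) sampling every state-action pair $\Lambda(T)$ times requires $SA\cdot\Lambda(T)$ time steps because all arms' state-action pairs are observed simultaneously at each step, and (ii) the per-step reward (hence per-step regret) is at most $K$ by the budget constraint, yielding $\Delta(T_1)=\cO(SAK\cdot\Lambda(T))$. Your discussion of why $N$ does not appear is in fact more explicit than the paper's.
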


\begin{proof}
The result directly follows from the subsequent two facts. First,  there are $N$ arms with a total number of state-action pairs $SA$ and to guarantee each state-action pair being sampled for $\Lambda(T)$ times, it requires $SA\cdot\Lambda(T)$ time slots since the agent can observe all arms' state-action pairs at each time slot. Second, at each decision time, {the maximum number of arms that can be activated is not greater than $K$ due to the budget constraint.}
\end{proof}

\subsection{The Regret of the Policy Execution Phase}
We next analyze the regret of the policy execution phase, i.e., $\Delta(\pi^\star, \bs_1, T_2)$, which is defined as 
\begin{align*}
    \Delta(\pi^\star, \bs_1, T_2):=\mathbb{E}[R(\pi^{opt},\bs_1, T_2)]-\mathbb{E}[R(\pi^\star, \bs_1, T_2)],
\end{align*}
which characterizes the accumulated reward gap when the optimal policy $\pi^{opt}$ and the learned policy $\pi^\star$ are executed, respectively. 
 For the entire parameter space,  two possible and disjoint events can occur at the policy execution phase. 
 The first event is called \textit{the failure event}, which occurs when the true MDPs $\{M_n\}$ lie outside the plausible MDPs set $\cM$ that we construct in line 4 of the R(MA)$^2$B-UCB policy.  The second event is called \textit{the good event} when the true MDPs $\{M_n\}$ lie inside the plausible MDPs set $\cM$.  Therefore, the regret of the policy execution phase can be decomposed into two parts as follows
\begin{align*}
    \Delta(\pi^\star, \bs_1, T_2)=&\Delta(\pi^\star, \bs_1, T_2)\pmb{1}(\{{M}_n\}\notin\cM)+\Delta(\pi^\star, \bs_1, T_2)\pmb{1}(\{{M}_n\}\in\cM).
\end{align*}

\subsubsection{Regret Conditioned on the Failure Event}
Specifically, we define the failure events as follows:
\begin{align*}
    \cE_p:=\{\exists (s, a), n, |{P}_n(s^\prime|s,a)-\hat{P}_n(s^\prime|s,a)|> \delta_n(s,a) \}, 
\end{align*}
and 
\begin{align*}
 \cE_r:=\{\exists (s, a), n, |\bar{r}_n(s,a)-\hat{r}_n(s,a)|> \delta_n(s,a)\},
\end{align*}
which indicate that the true parameters are outside the confidence intervals constructed in (9). 
We denote the correspondingly complementary events as $\cE_p^c$ and $\cE_r^c$, respectively. Therefore, we have  
$\{{M}_n\}\notin\cM:=\cE_p\cup\cE_r,$ $\{{M}_n\}\in\cM:=\cE_p^c\cap\cE_r^c.$
Given these, we first characterize the probability that the failure event occurs. 

\begin{lemma}\label{lemma3}
Provided that $\delta_n(s,a)=\sqrt{\frac{1}{2\Lambda(T)}\log\Big(\frac{2SA N\Lambda(T)}{\eta}\Big)}$, we have 
\begin{align*}
    Pr(\{{M}_n\}\notin\cM)\leq \frac{2\eta}{\Lambda(T)}.
\end{align*}
\end{lemma}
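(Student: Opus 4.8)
The plan is to bound the failure probability $Pr(\{M_n\}\notin\cM) = Pr(\cE_p\cup\cE_r)$ by a union bound over the constituent events and then apply a concentration inequality to each. First I would decompose: $Pr(\cE_p\cup\cE_r)\le Pr(\cE_p)+Pr(\cE_r)$, and further split $\cE_p$ over all triples $(n,s,a,s')$ and $\cE_r$ over all pairs $(n,s,a)$. So the target reduces to showing that for each fixed $(n,s,a)$ (and each $s'$ in the transition case), the empirical estimate deviates from the truth by more than $\delta_n(s,a)$ with probability at most $\eta/(SAN\Lambda(T))$ or a comparable quantity, so that the grand total is at most $2\eta/\Lambda(T)$ after summing over the $SAN$ (resp.\ $S^2AN$) terms — the precise bookkeeping of how the $S$ factors and the factor of $2$ distribute is the kind of routine accounting I would carry out carefully but not belabor here.

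Next I would invoke Hoeffding's inequality. Each state-action pair is sampled exactly $\Lambda(T)$ times under the generative model, so $\hat r_n(s,a)$ is an average of $\Lambda(T)$ i.i.d.\ bounded random variables in $[0,1]$ with mean $\bar r_n(s,a)$; Hoeffding gives $Pr(|\hat r_n(s,a)-\bar r_n(s,a)|>\epsilon)\le 2\exp(-2\Lambda(T)\epsilon^2)$. Setting $\epsilon=\delta_n(s,a)=\sqrt{\frac{1}{2\Lambda(T)}\log\big(\frac{2SAN\Lambda(T)}{\eta}\big)}$ makes the right side equal to $2\cdot\frac{\eta}{2SAN\Lambda(T)}=\frac{\eta}{SAN\Lambda(T)}$. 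Summing over all $SAN$ reward pairs yields $Pr(\cE_r)\le \frac{\eta}{\Lambda(T)}$. The transition case is analogous: for fixed $(n,s,a)$, the indicator $\pmb 1(s_n(h+1)=s'\mid s_n(h)=s,a_n(h)=a)$ averaged over the $\Lambda(T)$ visits concentrates around $P_n(s'|s,a)$ by Hoeffding in exactly the same form, so each event $\{|\hat P_n(s'|s,a)-P_n(s'|s,a)|>\delta_n(s,a)\}$ has probability at most $\frac{\eta}{SAN\Lambda(T)}$; summing this bound also gives $Pr(\cE_p)\le\frac{\eta}{\Lambda(T)}$ (I would note here that the deviation event $\cE_p$ as written only needs to be controlled per $(s,a)$ — if one instead unions over all $s'$ one picks up an extra $S$, which the stated $\delta_n$ with the $2SAN$ inside the log is designed to absorb; I would align the union-bound cardinality with the form of $\delta_n$ so the arithmetic closes). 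Adding the two contributions gives $Pr(\{M_n\}\notin\cM)\le\frac{2\eta}{\Lambda(T)}$.

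The main obstacle — really the only subtle point — is matching the cardinality of the union bound to the exact constant $2SAN\Lambda(T)$ appearing inside the logarithm in $\delta_n(s,a)$, and in particular deciding whether the transition-probability failure event is indexed by $(n,s,a)$ or by $(n,s,a,s')$. One must be consistent: the empirical-Hoeffding radius with $\log(2SAN\Lambda(T)/\eta)$ is calibrated precisely so that the union over the reward pairs and the transition events each contribute $\eta/\Lambda(T)$; any mismatch (e.g.\ an unaccounted-for factor of $S$ from unioning over $s'$, or the extra $\Lambda(T)$ factor that would appear if one also needed a union bound over time, which here is unnecessary because sampling is done once offline) would change the constant. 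I would therefore state the per-pair Hoeffding bound cleanly, perform the two union bounds explicitly, and conclude by addition; everything else is elementary.
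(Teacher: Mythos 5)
Your proposal is correct and follows essentially the same route as the paper's proof: a per-$(n,s,a)$ Hoeffding bound giving deviation probability $\eta/(SAN\Lambda(T))$, then a union bound over the $SAN$ reward estimates and the $SAN$ transition estimates, yielding $2\eta/\Lambda(T)$. Your explicit remark about whether the transition failure event should be unioned over $(n,s,a)$ or $(n,s,a,s')$ is in fact more careful than the paper, which silently treats the transition deviation per $(s,a)$ pair exactly as you do.
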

\begin{proof}
By Chernoff-Hoeffding inequality \cite{hoeffding1994probability}, we have
\begin{align*}
    Pr\big(|{P}_n(s^\prime|s,a)-\hat{P}_n(s^\prime|s,a)|> \delta_n(s,a) \big)
\leq \frac{\eta}{SA N\Lambda(T)}.
\end{align*}
By leveraging the union bound over all states, actions and number of arms, we have  
\begin{align*}
   Pr(\{{M}_n\}\notin\cM)
   \leq&\sum_{n=1}^N\sum_{(s,a)}Pr\big(|{P}_n(s^\prime|s,a)-\hat{P}_n(s^\prime|s,a)|> \delta_n(s,a)\big)
   \\
   &\qquad+\sum_{n=1}^N\sum_{(s,a)}Pr\big(|\bar{r}_n(s,a)-\hat{r}_n(s,a)|> \delta_n(s,a)\big)\\
    \leq& \frac{2\eta}{\Lambda(T)}.
\end{align*}
\end{proof}
Using Lemma~\ref{lemma3}, we characterize the regret conditioned on the failure event.  
\begin{lemma}\label{lemma4}
The regret conditioned on the failure event is given by 
\begin{align*}
    \Delta(\pi^\star, \bs_1, T_2)\pmb{1}(\{{M}_n\}\notin\cM)\leq \frac{2KT_2\eta}{\Lambda(T)}.
\end{align*}
\end{lemma}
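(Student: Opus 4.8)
The plan is to bound the regret of the policy execution phase on the failure event by two separate factors: the probability that the failure event occurs, and a deterministic worst-case bound on the per-run regret. The first factor is already provided by Lemma~\ref{lemma3}, which gives $Pr(\{M_n\}\notin\cM)\leq 2\eta/\Lambda(T)$. So the only work left is to argue that $\Delta(\pi^\star,\bs_1,T_2)$, conditioned on any event, is at most $K T_2$, and then multiply.

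First I would recall the definition $\Delta(\pi^\star,\bs_1,T_2)=\mathbb{E}[R(\pi^{opt},\bs_1,T_2)]-\mathbb{E}[R(\pi^\star,\bs_1,T_2)]$. The key observation is that at each of the $T_2$ time steps, at most $K$ arms can be activated due to the budget constraint $\sum_n a_n(t)\leq K$, and each activated arm yields a reward in $[0,1]$ (while passive arms yield $0$). Hence the instantaneous reward $R(t)=\sum_n r_n(t)$ satisfies $0\leq R(t)\leq K$ under any policy. Therefore $0\leq \mathbb{E}[R(\pi^{opt},\bs_1,T_2)]\leq K T_2$ and likewise for $\pi^\star$, so $\Delta(\pi^\star,\bs_1,T_2)\leq K T_2$ trivially, and this bound holds pointwise, in particular on the failure event.

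Then I would combine the two pieces:
\begin{align*}
\Delta(\pi^\star,\bs_1,T_2)\pmb{1}(\{M_n\}\notin\cM)\leq K T_2\cdot Pr(\{M_n\}\notin\cM)\leq K T_2\cdot\frac{2\eta}{\Lambda(T)}=\frac{2KT_2\eta}{\Lambda(T)},
\end{align*}
which is exactly the claimed bound. (Here I am implicitly reading $\Delta(\pi^\star,\bs_1,T_2)\pmb{1}(\{M_n\}\notin\cM)$ as the expectation of the run-dependent regret restricted to the failure event, so that the worst-case per-run bound $KT_2$ can be pulled out and the indicator's expectation is the failure probability.)

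I do not anticipate a serious obstacle here — the statement is essentially bookkeeping once Lemma~\ref{lemma3} is in hand. The only subtlety worth being careful about is the precise meaning of the notation $\Delta(\pi^\star,\bs_1,T_2)\pmb{1}(\{M_n\}\notin\cM)$: one must interpret it as an expected (not sample-path) quantity and be consistent about whether the ``regret'' being truncated is $R(\pi^{opt})-R(\pi^\star)$ or $T_2\xi^{opt}-R(\pi^\star)$; in either reading the $[0,KT_2]$ range bound applies, so the conclusion is unaffected. A second minor point is that the $\delta_n(s,a)$ used in Lemma~\ref{lemma3} has a slightly different constant ($2SAN\Lambda(T)/\eta$ inside the log) than the one in \eqref{eq:confidence_ball}; I would simply use the version consistent with Lemma~\ref{lemma3} so that the failure probability bound applies verbatim.
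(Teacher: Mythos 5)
Your proposal is correct and matches the paper's own argument: the paper likewise bounds the per-time-slot regret by $K$ (hence $KT_2$ overall), multiplies by the indicator of the failure event, and applies Lemma~\ref{lemma3} to replace that indicator's expectation by $2\eta/\Lambda(T)$. Your explicit remarks on the expectation-versus-sample-path reading of the indicator and the constant inside $\delta_n(s,a)$ only make precise what the paper leaves implicit.
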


\begin{proof}
According to Lemma \ref{lemma3}, we have 
\begin{align*}
   \Delta(\pi^\star, \bs_1, T_2)\pmb{1}(\{{M}_n\}\notin\cM)
&\leq KT_2 \pmb{1}(\{{M}_n\}\notin\cM)\leq\frac{2KT_2\eta}{\Lambda(T)}, 
\end{align*}
where the first inequality comes from the fact that at each time slot the regret is upper bounded by $K$.
\end{proof}

\subsubsection{Regrets Conditioned on the Good Event}
Provided Lemma \ref{lemma3}, we have that the probability that the true MDP is inside the plausible MDPs set, i.e., $\{{M}_n\}\in\cM$,  is at least $1-\frac{2\eta}{\Lambda(T)}$. 
Now we consider the regret conditioned on the good event $\{{M}_n\}\in\cM$. Define $\xi^{opt}$ as the optimal average reward achieved by the optimal policy  $\pi^{opt}$ and $\xi^{\star}$ as optimistic average reward achieved by the learned policy $\pi^\star$ for the ture MDP ${M_n}$. Then we have
\begin{align*}
    \Delta(\pi^\star, \bs_1, T_2)=T_2\xi^{opt}-T_2\xi^{\star}.
\end{align*}
We first present a key lemma.

\begin{lemma} (Optimism)
Conditioned on the good event, there exists a transition $\tilde{P}_n\in\cP_n, \forall n\in[N]$ such that 
\begin{align*}
    &\sum_{n=1}^{N}\sum_{t=1}^{T_2}\sum_{(s,a)}\mu_n^{opt}(s,a;t)\bar{r}_n(s,a)
    \leq \sum_{n=1}^{N}\sum_{t=1}^{T_2}\sum_{(s,a)}\tilde{\mu}_n(s,a;t)\tilde{r}_n(s,a),
\end{align*}
where $\tilde{\mu}_n$ is the optimal occupancy measure derived from $\{\tilde{P}_n, \forall n\in[N]\}.$
\label{lemma5}
\end{lemma}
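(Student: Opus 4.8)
The plan is to establish Lemma~\ref{lemma5} by exhibiting a feasible point for the optimistic planning LP~\eqref{eq:UCBOpt} (equivalently the extended LP~\eqref{eq:UCB_extended}) whose objective value dominates the reward collected by the true optimal policy $\pi^{opt}$. Concretely, I would first let $\mu_n^{opt}$ denote the occupancy measures induced by $\pi^{opt}$ under the \emph{true} MDP parameters $\{(P_n, \bar r_n)\}$; these automatically satisfy the budget constraint~\eqref{eq:con_rel_lp}, the flow-conservation constraints~\eqref{eq:pmc_rel_lp1} with kernel $P_n$, and the initial condition~\eqref{eq:pmc_rel_lp2}. The key observation is that, conditioned on the good event $\{M_n\}\in\cM$, the true kernel $P_n(\cdot|s,a)$ itself lies in the confidence set $\cP_n(s,a)$, so the choice $\tilde P_n = P_n$ is an admissible element of the plausible set, and $(\mu_n^{opt},\tilde P_n)$ is therefore feasible for the optimistic planning problem~\eqref{eq:UCBOpt}.

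Next I would handle the reward side. On the good event $\cE_r^c$ holds, so $\bar r_n(s,a) \le \hat r_n(s,a) + \delta_n(s,a) = \tilde r_n(s,a)$ for the optimistic reward defined in~\eqref{eq:confidence_ball}; hence for every $(s,a;t)$ we have $\mu_n^{opt}(s,a;t)\bar r_n(s,a) \le \mu_n^{opt}(s,a;t)\tilde r_n(s,a)$ since occupancy measures are nonnegative. Summing over $n$, $t$, and $(s,a)$ gives
\begin{align*}
\sum_{n=1}^{N}\sum_{t=1}^{T_2}\sum_{(s,a)}\mu_n^{opt}(s,a;t)\bar r_n(s,a)
\le \sum_{n=1}^{N}\sum_{t=1}^{T_2}\sum_{(s,a)}\mu_n^{opt}(s,a;t)\tilde r_n(s,a).
\end{align*}
Finally, since $(\mu_n^{opt})$ is feasible for the optimistic LP with $\tilde P_n = P_n$, and $\tilde\mu_n$ is by definition the \emph{optimal} occupancy measure for that LP, the right-hand side is no larger than $\sum_{n}\sum_{t}\sum_{(s,a)}\tilde\mu_n(s,a;t)\tilde r_n(s,a)$, which chains the two inequalities into the claimed bound.

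The main subtlety — and the step I would be most careful about — is verifying that a single $\tilde P_n$ (namely the true $P_n$) simultaneously serves both as a member of the confidence ball \emph{and} as the kernel defining the feasible occupancy measure, so that the flow constraints in~\eqref{eq:UCBOpt} are genuinely satisfied; this is exactly what the ``good event'' buys us, and it is why the statement asserts ``there exists a transition $\tilde P_n\in\cP_n$'' rather than claiming it for the specific optimistic $\tilde P_n$ chosen by the solver. A minor point to state cleanly is that $\pi^{opt}$ here refers to the optimal policy of the \emph{relaxed} problem over the horizon $T_2$ (the LP upper bound), so that its occupancy measures are well-defined LP-feasible points; passing from the relaxed optimum to the original $\pi^{opt}$ is then standard, as it was used in the proof of Theorem~\ref{thm:asym_opt}. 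No delicate estimates are needed beyond the confidence-ball containment already quantified in Lemma~\ref{lemma3}.
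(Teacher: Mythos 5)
Your proposal is correct and follows essentially the same route as the paper's proof: condition on the good event so that the true kernel $P_n$ lies in $\cP_n$ (take $\tilde P_n = P_n$), use reward optimism $\bar r_n(s,a)\le \tilde r_n(s,a)$ together with nonnegativity of the occupancy measure, and then pass to $\tilde\mu_n$ via optimality of the optimistic LP. If anything, your final step (feasibility of $\mu_n^{opt}$ for the optimistic LP plus optimality of $\tilde\mu_n$, yielding an inequality) is stated more carefully than the paper, which writes that step as an equality.
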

\begin{proof}
Conditioning on the good event, the true model $P_n$ is contained in $\cP_n, \forall n$.
Furthermore, conditioned
on the good event $\bar{r}_n(s,a)\leq \tilde{r}_n(s,a)$.  By setting $\tilde{P}_n=P_n$, we have
\begin{align*}
\sum_{n=1}^{N}\sum_{t=1}^{T_2}\sum_{(s,a)}\mu_n^{opt}(s,a;t)\bar{r}_n(s,a)
\leq&\sum_{n=1}^{N}\sum_{t=1}^{T_2}\sum_{(s,a)}\mu_n^{opt}(s,a;t)\tilde{r}_n(s,a)\\ 
= &\sum_{n=1}^{N}\sum_{t=1}^{T_2}\sum_{(s,a)}\tilde{\mu}_n(s,a;t)\tilde{r}_n(s,a),
\end{align*}
which completes the proof.
\end{proof}

\begin{remark}
Lemma \ref{lemma5} indicates that inside the plausible MDPs set $\cM$, there exists an MDP $\{\tilde{M}_n\}$ with parameters $\{\tilde{P}_n, \tilde{r}_n\}$ achieving no less accumulated reward compared to the reward achieved by the optimal policy for the true MDP $\{M_n\}$. This fact is summarized in the following lemma.
\end{remark}

\begin{lemma}\label{lemma6}
There exists an optimistic MDP $\{\tilde{M}_n\}\in\cM$ such that {the associated policy} $\tilde{\pi}$ can achieve a total reward no less than that achieved by the optimal policy, i.e.,
\begin{align*}
    T_2\xi^{opt}\leq T_2\tilde{\xi},
\end{align*}
where $\tilde{\xi}$ is the average reward per time slot for the optimistic MDP $\{\tilde{M}_n\}$ under the policy $\tilde{\pi}.$
\end{lemma}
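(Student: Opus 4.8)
The plan is to deduce Lemma~\ref{lemma6} directly from Lemma~\ref{lemma5} together with the definition of the average reward and the relation between the finite-horizon reward, the occupancy measures, and the optimistic planning problem~\eqref{eq:UCB_extended}. Recall that the average reward quantities are defined by normalizing the cumulative reward by the horizon length; here working on the policy-execution block of length $T_2$ we identify $T_2\xi^{opt}$ with the cumulative expected reward $\sum_{n=1}^{N}\sum_{t=1}^{T_2}\sum_{(s,a)}\mu_n^{opt}(s,a;t)\bar{r}_n(s,a)$ collected by the optimal policy $\pi^{opt}$, where $\mu_n^{opt}$ is its occupancy measure. Likewise, for any feasible MDP $\{\tilde{M}_n\}\in\cM$ with associated optimal occupancy measure $\tilde\mu_n$ and induced policy $\tilde\pi$, the cumulative reward equals $\sum_{n=1}^{N}\sum_{t=1}^{T_2}\sum_{(s,a)}\tilde\mu_n(s,a;t)\tilde r_n(s,a)$, and dividing by $T_2$ gives $\tilde\xi$.

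First I would invoke Lemma~\ref{lemma5}, conditioned on the good event, to obtain a transition kernel $\tilde P_n\in\cP_n$ for every $n\in[N]$ — concretely the choice $\tilde P_n=P_n$ together with the optimistic reward $\tilde r_n(s,a)=\hat r_n(s,a)+\delta_n(s,a)\ge \bar r_n(s,a)$ — such that
\begin{align*}
\sum_{n=1}^{N}\sum_{t=1}^{T_2}\sum_{(s,a)}\mu_n^{opt}(s,a;t)\bar r_n(s,a)
\le \sum_{n=1}^{N}\sum_{t=1}^{T_2}\sum_{(s,a)}\tilde\mu_n(s,a;t)\tilde r_n(s,a).
\end{align*}
The MDP $\{\tilde M_n\}=(\cS,\cA,\tilde P_n,\tilde r_n)$ then lies in the plausible set $\cM$ by construction, and $\tilde\mu_n$ is a feasible (indeed optimal) point of the planning LP restricted to that MDP, so the policy $\tilde\pi$ recovered from $\tilde\mu_n$ via~\eqref{eq:recovered-policy} is well-defined and its cumulative reward over the block equals the right-hand side above. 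Dividing both sides by $T_2$ converts the two cumulative-reward sums into $\xi^{opt}$ and $\tilde\xi$ respectively, yielding $T_2\xi^{opt}\le T_2\tilde\xi$, which is exactly the claim.

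The main obstacle — really the only non-bookkeeping point — is justifying that the cumulative expected reward of a Markov non-stationary randomized policy over the finite block equals the occupancy-measure expression $\sum_{n,t,(s,a)}\mu_n(s,a;t)\bar r_n(s,a)$, and that this identity is consistent on both sides (true MDP with $\pi^{opt}$, optimistic MDP with $\tilde\pi$). This is the standard occupation-measure reformulation of finite-horizon constrained MDPs used to derive the LP~\eqref{eq:obj_rel_lp}--\eqref{eq:pmc_rel_lp2}, so I would simply cite~\cite{altman1999constrained} and the construction in Section~\ref{sec:truncated-policy}; one must only be slightly careful that the block starts from the appropriate state distribution and that the ``average reward'' normalization used in the regret definition coincides, up to the block length $T_2$, with the cumulative reward. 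Everything else is a one-line division argument on top of Lemma~\ref{lemma5}.
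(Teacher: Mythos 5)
Your proof is correct and takes essentially the same route as the paper: invoke Lemma~\ref{lemma5} (with $\tilde{P}_n=P_n$ and the optimistic rewards, valid on the good event) and identify $T_2\xi^{opt}$ and $T_2\tilde{\xi}$ with the corresponding occupancy-measure reward sums via the standard finite-horizon LP reformulation. The only cosmetic difference is that the paper realizes $\tilde{\xi}$ as the value of the maximization over the whole plausible set $\cM$ (the extended LP), while you exhibit the specific member of $\cM$ with true transitions and optimistic rewards; since the lemma is an existence claim, both suffice.
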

\begin{proof}
The following relation holds
\begin{align*}
    T_2\xi^{opt}&\overset{(a)}{=}\sum_{n=1}^{N}\sum_{t=1}^{T_2}\sum_{(s,a)}\mu_n^{opt}(s,a;t)\bar{r}_n(s,a)\displaybreak[0]\\
    &\overset{(b)}{\leq} \sum_{n=1}^{N}\sum_{t=1}^{T_2}\sum_{(s,a)}\mu_n^{opt}(s,a;t)\tilde{r}_n(s,a)\displaybreak[1]\\
    &\overset{(c)}{\leq} \max_{\{\mu_n\}, \{\tilde{M}_n\}\in\cM}\sum_{n=1}^{N}\sum_{t=1}^{T_2}\sum_{(s,a)}\mu_n(s,a;t)\tilde{r}_n(s,a)\displaybreak[2]\\ 
    &:=\sum_{n=1}^{N}\sum_{t=1}^{T_2}\sum_{(s,a)}\tilde{\mu}_n(s,a;t)\tilde{r}_n(s,a)=T_2\tilde{\xi},
\end{align*}
where (a) holds true since our \textit{OMR Index Policy} achieves the optimality, (b) is due to the fact that $\bar{r}_n(s,a)\leq \tilde{r}_n(s,a), \forall s, a, n$, and (c) follows directly from Lemma \ref{lemma5}.
\end{proof}

We are now ready to characterize the regret conditioned on the good event. 

{
\begin{lemma}\label{lemma7}
Conditioned on the good event, the regret is given by 
\begin{align*}
    \Delta(\pi^\star, \bs_1, T_2)=\cO(2K\sqrt{T}).
\end{align*}
\end{lemma}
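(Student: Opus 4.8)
\textbf{Proof proposal for Lemma~\ref{lemma7}.}
The plan is to start from the decomposition established just before the lemma, namely $\Delta(\pi^\star,\bs_1,T_2) = T_2\xi^{opt} - T_2\xi^\star$ restricted to the good event, and to insert the optimistic reward rate $\tilde\xi$ as an intermediate quantity. By Lemma~\ref{lemma6} we have $T_2\xi^{opt}\le T_2\tilde\xi$, so conditioned on the good event
\al{
\Delta(\pi^\star,\bs_1,T_2)\le T_2\tilde\xi - T_2\xi^\star
= \sum_{n=1}^N\sum_{t=1}^{T_2}\sum_{(s,a)}\Big(\tilde\mu_n(s,a;t)\tilde r_n(s,a) - \mu_n^\star(s,a;t)\bar r_n(s,a)\Big),
}
where $\tilde\mu_n$ is the occupancy measure solving the extended LP~\eqref{eq:UCB_extended} with the optimistic parameters $\tilde P_n,\tilde r_n$, and $\mu_n^\star$ is the occupancy measure actually induced by executing $\pi^\star$ under the true transition kernel $P_n$. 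So the first step is to split the summand into a \emph{reward estimation} term $\tilde\mu_n(s,a;t)(\tilde r_n(s,a)-\bar r_n(s,a))$ and a \emph{transition estimation} term $\bar r_n(s,a)(\tilde\mu_n(s,a;t)-\mu_n^\star(s,a;t))$, using $0\le\bar r_n\le 1$.

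For the reward term I would use that, conditioned on the good event, $0\le\tilde r_n(s,a)-\bar r_n(s,a)\le 2\delta_n(s,a)$ by the definition of the confidence set~\eqref{eq:confidence_ball}, and then bound $\sum_{(s,a)}\tilde\mu_n(s,a;t)\cdot 2\delta_n(s,a)$. Here the key structural fact is that at most $K$ units of activation budget are spent per time step, which under the convention $r_n(s,0)=0$ means only the active occupancy mass contributes; combined with $\delta_n(s,a)=\mathcal{O}\big(1/\sqrt{\Lambda(T)}\big)$ and summing over $t\in[T_2]$ this gives an $\mathcal{O}\big(KT_2/\sqrt{\Lambda(T)}\big)$ contribution. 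For the transition term I would propagate the per-step transition-kernel error $\|\tilde P_n(\cdot|s,a)-P_n(\cdot|s,a)\|_1\le S\delta_n(s,a)$ through the occupancy-measure recursion~\eqref{eq:pmc_rel_lp1}: since occupancy measures are probability distributions and the horizon is finite, the accumulated $\ell_1$ deviation $\sum_{(s,a)}|\tilde\mu_n(s,a;t)-\mu_n^\star(s,a;t)|$ telescopes into $\mathcal{O}\big(t\cdot S\delta_n(s,a)\big)$-type bounds, again weighted by the fact that the reward (hence the relevant mass) is confined to $\le K$ active arms. Plugging $\Lambda(T)=\mathcal{O}(T^{1/2})$ and $T_2\le T$ makes both contributions $\mathcal{O}(K\sqrt{T})$, so that their sum is $\mathcal{O}(2K\sqrt{T})$.

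The main obstacle I anticipate is the transition-estimation term: one must carefully control how the per-step kernel error compounds across the $T_2$ steps of the policy-execution phase while keeping the final bound at order $\sqrt{T}$ rather than $T^{3/4}$ or worse. The trick is that the planning budget $\Lambda(T)=\mathcal{O}(\sqrt T)$ is chosen precisely so that $\delta_n = \mathcal{O}(T^{-1/4})$, and the horizon-dependent blow-up from the occupancy recursion is absorbed because the \emph{regret rate} $\tilde\xi-\xi^\star$ (reward \emph{per} time slot, via the $\lim_{T\to\infty}\frac1T$ normalization in the definition of $\xi$) contributes only a bounded per-step term; multiplying by $T_2$ then yields the linear-in-$T_2$-times-$\delta$ scaling, i.e. $T_2\cdot\mathcal{O}(T^{-1/4})=\mathcal{O}(T^{3/4})$ — which would actually be too weak, so the sharper route is to note that the extended LP's feasibility constraints force $\tilde\mu_n$ itself to respect the empirical transition structure within $\delta_n$, making the $\tilde\mu_n$-vs-$\mu_n^\star$ comparison a \emph{one-step} (not cumulative) discrepancy at each $t$, bounded by $\mathcal{O}(S\delta_n)$ uniformly. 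Summed over $t\le T_2$ and the $\le K$ active arms with $\delta_n=\mathcal{O}(1/\sqrt{\Lambda(T)})=\mathcal{O}(T^{-1/4})$ this gives $\mathcal{O}(KT_2 T^{-1/4})$; the claimed $\mathcal{O}(2K\sqrt T)$ then follows by also using that $T_2$ and $\Lambda(T)=\mathcal O(\sqrt T)$ satisfy $T_1+T_2=T$ so the exploration length and exploitation error are balanced — this balancing is exactly the ``right choice of $\Lambda(T)$'' emphasized in the text, and nailing down the constant $2K$ is the delicate bookkeeping step.
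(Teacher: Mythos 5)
Your route is genuinely different from the paper's, and as executed it does not reach the claimed bound. You bound $T_2\tilde\xi - T_2\xi^\star$ by splitting into a reward-estimation term $\tilde\mu_n(\tilde r_n-\bar r_n)$ and a transition-estimation term, and control both by the confidence width $\delta_n(s,a)=\tilde{\cO}\big(1/\sqrt{\Lambda(T)}\big)$. With $\Lambda(T)=\cO(\sqrt{T})$ this width is $\tilde{\cO}(T^{-1/4})$, while the exploration phase only occupies $T_1=SA\,\Lambda(T)=\cO(SA\sqrt{T})$ steps, so the execution phase has $T_2=T-T_1=\Theta(T)$ steps. Hence already your reward term is of order $K\,T_2\,\delta_n=\tilde{\cO}(KT^{3/4})$, and the transition term, even granting your ``one-step discrepancy'' reading of the extended LP constraints, is again $\cO(K\,T_2\,S\,\delta_n)=\tilde{\cO}(KST^{3/4})$. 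You notice the $T^{3/4}$ problem yourself, but neither proposed fix closes it: the one-step argument still leaves the factor $T_2\delta_n$, and the ``balancing'' of $T_1$ against $T_2$ is vacuous because $T_2$ cannot be pushed below $T-\cO(SA\sqrt{T})$. So your argument proves at best the usual explore-then-commit rate $\tilde{\cO}\big(K(SA\sqrt{T}+ST^{3/4})\big)$, not $\cO(2K\sqrt{T})$.

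The paper's proof never propagates $\delta_n$ through the execution phase at all. It invokes Lemma~\ref{lemma6} to replace $\xi^{opt}$ by an average reward $\xi^\prime$ written, for some $\sigma\ge 0$, as $T_2\xi^\prime=\sum_{n,t,(s,a)}\mu^{opt}_n(s,a;t)\big(\bar r_n(s,a)+\sigma\big)$; it then splits $T_2\xi^\prime-T_2\xi^\star$ into a first term $\sum_{n,t,(s,a\neq 0)}\big(\mu^{opt}_n(s,a;t)-\mu^\star_n(s,a;t)\big)\bar r_n(s,a)$, asserted to be nonpositive by optimism, and a second term $\sigma\sum_{n,t,(s,a\neq 0)}\mu^{opt}_n(s,a;t)\le KT_2\sigma$ via the budget constraint, and finally sets $\sigma=2/\Lambda(T)$ to obtain $2K\sqrt{T}$. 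In other words, the entire burden sits on the claims that the occupancy-difference term is $\le 0$ and that the optimism gap can be taken to be $\sigma=2/\Lambda(T)$ per active state-action pair, rather than the confidence width $\tilde{\cO}(1/\sqrt{\Lambda(T)})$ that your decomposition uses---which is precisely why you land at $T^{3/4}$. To establish the lemma as stated you would need to either reproduce those two claims (and justify them) or find a genuinely sharper argument; confidence-radius bookkeeping alone cannot deliver $\cO(2K\sqrt{T})$.
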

\begin{proof}
Based on Lemma~\ref{lemma6}, there exists a policy $\pi^\prime$ which achieves an average reward $\xi^\prime$ for MDP $\{M_n^\prime\}$ such that
\begin{align*}
  \Delta(\pi^\star, \bs_1, T_2)&=T_2\xi^{opt}-T_2\xi^{\star}\leq T_2\xi^\prime-T_2\xi^{\star}.  
\end{align*}
{ Without loss of generality, we assume that the policy $\pi^\prime$ satisfies 
$$T_2{\xi^\prime}=\sum_{n=1}^{N}\sum_{t=1}^{T_2}\sum_{(s,a)}{\mu}^{opt}_n(s,a;t)(\bar{r}_n(s,a)+\sigma)\geq T_2\xi^{opt},$$
}
with $\sigma\geq 0$.
Under the policy $\pi^\star$, we define
\begin{align*}
    T_2\xi^\star:=\sum_{n=1}^{N}\sum_{t=1}^{T_2}\sum_{(s,a)}{\mu}_n^\star(s,a;t)\bar{r}_n(s,a).
\end{align*}
Hence we have 
\begin{align*}
\Delta(\pi^\star, \bs_1, T_2)
\leq&\sum_{n=1}^{N}\sum_{t=1}^{T_2}\sum_{(s,a)}{\mu}^{opt}_n(s,a;t)(\bar{r}_n(s,a)+\sigma)-\sum_{n=1}^{N}\sum_{t=1}^{T_2}\sum_{(s,a)}{\mu}_n^\star(s,a;t)\bar{r}_n(s,a)\displaybreak[1]\\
\overset{(a)}{=}& \underset{term1}{\underbrace{\sum_{n=1}^{N}\sum_{t=1}^{T_2}\sum_{(s,a\in\mathcal{A}\setminus \{0\})}({\mu}^{opt}_n(s,a;t)-\mu_n^\star(s,a;t))\bar{r}_n(s,a)}}\\
&\qquad+\underset{term2}{\underbrace{\sum_{n=1}^{N}\sum_{t=1}^{T_2}\sum_{(s,a\in\mathcal{A}\setminus \{0\})}{\mu}^{opt}_n(s,a;t)\sigma}}\\ 
\overset{(b)}{\leq}& \cO(2K\sqrt{T}),
\end{align*}
{where $(a)$ holds due to the fact that $\bar{r}_n(s,0)=0$. (b) follows from that (i) we have $term 1\leq 0$ based on optimism that the $\sum_{n=1}^{N}\sum_{t=1}^{T_2}\sum_{(s,a)}{\mu}^{\star}_n(s,a;t)\bar{r}_n(s,a)$ is no less than $\sum_{n=1}^{N}\sum_{t=1}^{T_2}\sum_{(s,a)}{\mu}^{opt}_n(s,a;t)\bar{r}_n(s,a)$;  
and (ii) we have $term2 \leq KT_2\sigma$ since $\mu^{opt}_n(s,a;t)$ is the occupancy measure and satisfies the budget constraint. By letting $\sigma=\frac{2}{\Lambda(T)}$, we have the result as shown in (b). This completes the proof.} 
\end{proof}}

\subsection{Total Regret}
According to Lemma \ref{lemma2}, Lemma \ref{lemma4} and Lemma \ref{lemma7}, when $\Lambda(T)=\sqrt{T}$,
the total regret is given by 
\begin{align*}
\Delta(\pi^\star,\bs_1; T)
=&\Delta(T_1)+\Delta(\pi^\star, \bs_1, T_2)\\  =&\cO\left({SAK} \Lambda(T)\right)+\cO({2K\eta\sqrt{T}})+\cO(2K\sqrt{T})\\ 
=&\cO((SAK+2K(1+\eta))\sqrt{T}).
\end{align*}
This completes the proof of Theorem~\ref{thm:regret}.

\end{document}